\documentclass{article}

\usepackage{microtype}
\usepackage{graphicx}
\usepackage{subcaption}
\usepackage{booktabs} % for professional tables

\usepackage{hyperref}

\makeatletter
\expandafter\def\csname ver@algorithmic.sty\endcsname{1000/00/00}
\makeatother

\usepackage[accepted]{icml2026}

\usepackage{amsmath}
\usepackage{amssymb}
\usepackage{mathtools}
\usepackage{amsthm}

\usepackage{booktabs}      
\usepackage{multirow}      
\usepackage[table]{xcolor}
\newcommand{\good}[1]{\cellcolor{green!15}#1} 
\newcommand{\bad}[1]{\cellcolor{red!15}#1}

\DeclareMathOperator{\Prob}{\mathbb{P}}

\usepackage{amsmath,amssymb,amsthm}
\usepackage{graphicx}
\usepackage{bm}  
\usepackage{algorithm}
\usepackage[noend]{algpseudocode}
\usepackage{enumitem}

\newcommand{\blfootnote}[1]{%
  \begingroup
  \renewcommand\thefootnote{}\footnote{#1}%
  \addtocounter{footnote}{-1}%
  \endgroup
}

\usepackage{makecell}
\usepackage{xcolor}
\usepackage{listings}
\usepackage{subcaption}

\lstdefinelanguage{json}{
  basicstyle=\ttfamily\small,
  stepnumber=1,
  showstringspaces=false,
  breaklines=true,
  literate=
   *{0}{{{\color{numb}0}}}{1}
    {1}{{{\color{numb}1}}}{1}
    {2}{{{\color{numb}2}}}{1}
    {3}{{{\color{numb}3}}}{1}
    {4}{{{\color{numb}4}}}{1}
    {5}{{{\color{numb}5}}}{1}
    {6}{{{\color{numb}6}}}{1}
    {7}{{{\color{numb}7}}}{1}
    {8}{{{\color{numb}8}}}{1}
    {9}{{{\color{numb}9}}}{1}
    {:}{{{\color{punct}{:}}}}{1}
    {,}{{{\color{punct}{,}}}}{1}
    {\{}{{{\color{delim}{\{}}}}{1}
    {\}}{{{\color{delim}{\}}}}}{1}
    {[}{{{\color{delim}{[}}}}{1}
    {]}{{{\color{delim}{]}}}}{1},
}

\definecolor{numb}{RGB}{188,75,184}
\definecolor{punct}{RGB}{66,66,66}
\definecolor{delim}{RGB}{45,156,219}

\usepackage{tikz}
\usetikzlibrary{positioning, shapes.geometric, arrows.meta, positioning, shadows, calc, fit, backgrounds}

\usepackage{stfloats}
\usepackage{xurl} % Breaks long URLs at any character
\usepackage[capitalize,noabbrev]{cleveref}

\theoremstyle{plain}
\newtheorem{theorem}{Theorem}[section]
\newtheorem{proposition}[theorem]{Proposition}
\newtheorem{lemma}[theorem]{Lemma}

\theoremstyle{definition}

\newtheorem{assumption}[theorem]{Assumption}
\theoremstyle{remark}

\usepackage[disable,textsize=tiny]{todonotes}
\icmltitlerunning{Counterfactual Residual Data Augmentation for
Regression}

\begin{document}

\twocolumn[
  \icmltitle{Counterfactual Residual Data Augmentation for
Regression}

  % It is OKAY to include author information, even for blind submissions: the
  % style file will automatically remove it for you unless you've provided
  % the [accepted] option to the icml2026 package.

  % List of affiliations: The first argument should be a (short) identifier you
  % will use later to specify author affiliations Academic affiliations
  % should list Department, University, City, Region, Country Industry
  % affiliations should list Company, City, Region, Country

  % You can specify symbols, otherwise they are numbered in order. Ideally, you
  % should not use this facility. Affiliations will be numbered in order of
  % appearance and this is the preferred way.
  \icmlsetsymbol{equal}{*}

  \begin{icmlauthorlist}
    \icmlauthor{Hossein Mohebbi}{uw,vector}
    \icmlauthor{Oliver Schulte}{sfu}
    \icmlauthor{Ke Li}{sfu}
    \icmlauthor{Pascal Poupart}{uw,vector}
  \end{icmlauthorlist}

  \icmlaffiliation{uw}{University of Waterloo}
  \icmlaffiliation{vector}{Vector Institute}
  \icmlaffiliation{sfu}{Simon Fraser University}

  \icmlcorrespondingauthor{Hossein Mohebbi}{hossein.mohebbi@uwaterloo.ca}

  % You may provide any keywords that you find helpful for describing your
  % paper; these are used to populate the "keywords" metadata in the PDF but
  % will not be shown in the document
  \icmlkeywords{Machine Learning, ICML, Counterfactual Reasoning, Data Augmentation, Regression, Causality, Residuals}

  \vskip 0.3in
]

% this must go after the closing bracket ] following \twocolumn[ ...

% This command actually creates the footnote in the first column listing the
% affiliations and the copyright notice. The command takes one argument, which
% is text to display at the start of the footnote. The \icmlEqualContribution
% command is standard text for equal contribution. Remove it (just {}) if you
% do not need this facility.

% Use ONE of the following lines. DO NOT remove the command.
% If you have no special notice, KEEP empty braces:
\printAffiliationsAndNotice{}  % no special notice (required even if empty)
% Or, if applicable, use the standard equal contribution text:
% \printAffiliationsAndNotice{\icmlEqualContribution}

\begin{abstract}
Data-driven modeling in real-world regression tasks often suffers from limited training samples, high collection costs, and noisy observations. Inspired by the impact of data augmentation in vision and language, 
we propose a novel \emph{Counterfactual Residual Data Augmentation (CRDA)} technique for tabular regression. Our key insight is that once a regressor has modeled the systematic component of the data, the remaining noise can be viewed as an invariant residual that remains stable under small perturbations of carefully selected features. We exploit this residual invariance to generate new, yet realistic, training samples, effectively expanding the dataset without requiring additional real data. Our method is model-agnostic and readily applicable to various types of regressors. In experiments across datasets from a variety of benchmark repositories, on average, \emph{CRDA} reduces an \textit{MLP Regressor}'s MSE by \textbf{22.9\%} and an \textit{XGBoost Regressor}'s MSE by \textbf{6.4\%}. When compared to existing state-of-the-art data generators and augmentation techniques, CRDA consistently outperforms in MSE reduction. By adding principled counterfactual variations to the training data, our method offers a simple and efficient remedy for noise-prone, small-sample regression settings.
\end{abstract}

\section{Introduction}
\label{sec:intro}

Data scarcity and noise are frequently encountered obstacles in regression tasks across domains such as medicine, finance, and manufacturing. Collecting large-scale, high-quality data can be expensive or impractical, and existing data augmentation techniques, while well developed in computer vision and NLP, often do not translate naturally to tabular regression. As a result, many supervised learning models fail to fully capture the underlying behavior of real-world processes when only limited training examples are available. 

In this paper, we propose \textbf{Counterfactual Residual Data Augmentation (CRDA)}, a simple and flexible method to bolster regression performance under small data constraints. The core idea is straightforward: 
\emph{(i)} we train a base predictor (e.g., MLP or XGBoost) on a dataset, 
\emph{(ii)} identify one or more features whose perturbations do not alter the residual (prediction error) distribution significantly, and 
\emph{(iii)} generate new samples by modifying those features while preserving the original ``noise'' or residual component. 
To illustrate, consider a house price prediction task. A model captures systematic value drivers like location and square footage, while the residual captures unobserved factors like a \textit{bidding war} driven by a specific buyer's urgency. Our key insight is that varying a secondary feature, such as garage finish, changes the systematic price but is unlikely to alter the specific buyer's urgency. 
CRDA exploits this independence to synthesize a residual-preserving counterfactual: a house with a different garage finish, an updated systematic price, but the exact same ``bidding war'' residual.

This pattern recurs across domains: in clinical recovery-time prediction, perturbing the amount of physical therapy changes a patient's recovery time but does not alter their unobserved physiology; in crop-yield regression, perturbing fertilizer dose shifts the predicted yield while leaving the field's unmeasured pest pressure untouched; in home energy forecasting, perturbing the thermostat setpoint moves predicted consumption but not whether a window was left open that day. 
\blfootnote{Project page with reproducible experiment code and Python package: \url{https://crda-project.github.io/}.}

\textbf{Motivation and Benefits.}
Our motivation stems from the difficulty of acquiring sufficient labeled data in many practical applications, coupled with the risk of overfitting when sample sizes are small. A major attraction of \emph{CRDA} is its ability to insert new data points \emph{without} assuming domain-specific transformations or heuristics. Instead, it relies on a learned predictor to separate systematic behavior from noise, then conserves the latter across minor interventions of selected features. As a result, the augmented samples remain consistent with the underlying distributional assumptions, improving model fit and reducing variance. Empirically, we observe double-digit percent reductions in test error for small-sample regression tasks, demonstrating the utility of our approach across a variety of dataset types.

Our work makes the following key contributions:

\textbf{(1) New Data Augmentation Framework.} We introduce a model-agnostic strategy for augmenting tabular regression data, centered on counterfactual reasoning and residual invariance.

\textbf{(2) Residual Invariance Principle.} We formalize how certain features can be perturbed without corrupting the noise structure, providing insights to guide feature selection.

\textbf{(3) Empirical Validation.} We evaluate CRDA on synthetic benchmarks and real-world datasets from standard repositories (e.g., UCI, PMLB), illustrating consistent improvements across neural and ensemble models.

\section{Related Work}
\label{sec:related}

\textbf{General Data Augmentation.}
Data augmentation refers to the strategy of enlarging or diversifying a training set via synthetic transformations. While central to success in computer vision and NLP~\citep{zhang2018mixup,yun2019cutmix,cubuk2019autoaugment}, these techniques often rely on \emph{label-preserving} symmetries (e.g.\ image rotations) or domain-specific invariances (e.g.\ back-translation in text). However, applying these methods to tabular regression remains non-trivial.

\textbf{Tabular and Regression-Specific Augmentation.}
Classical oversampling approaches include SMOTE \citep{chawla2002smote} and its regression extensions \citep{branco2017smogn}, which interpolate between samples but do not necessarily preserve higher-order feature interactions or heteroskedastic noise. 
Recent advances seek to formalize regression augmentation through geometric properties. 
For example, RegMix \citep{hwang2021regmix} optimizes Mixup policies to generate samples within high-density regions of the data manifold, aiming to preserve the underlying structure. Conversely, C-Mixup \citep{yao2022cmixup} addresses the risk of manifold intrusion by restricting mixing to sample pairs with high label similarity.
Closely related is Anchor Data Augmentation (ADA) \citep{schneider2023anchor}, which extends Anchor Regression to augmentation. ADA identifies ``anchors" (by clustering) and generates samples using a first-order Taylor approximation, effectively assuming local linearity within clusters.
While these methods enforce geometric regularity (linearity or manifold density), they can struggle in highly non-linear or sparse regimes where local linearity assumptions fail. CRDA aims to avoid this by enforcing \textit{statistical} regularity (residual invariance) instead.

\textbf{Deep Generative Models.}
Deep generative models offer an alternative by learning the joint distribution to sample entirely new rows. Approaches like CTGAN \citep{xu2019ctgan}, TVAE \citep{xu2019ctgan}, and TabDDPM \citep{kotelnikov2023tabddpm} have shown promise in privacy-preserving data synthesis. 
However, these models typically treat the target variable as just another column, failing to preserve an instance’s specific residual noise. This often leads to ``realistic" looking samples that degrade predictive performance.

\textbf{Residual Bootstrapping.}
In statistical literature, residuals have been leveraged extensively for \emph{uncertainty quantification} rather than data augmentation. For example, the residual bootstrap \citep{efron1992bootstrap} and conformal prediction methods \citep{barber2021jackknifeplus} resample or reuse residuals to construct confidence intervals. Our work repurposes this mechanism for augmentation.

\textbf{Causal and Counterfactual Data Augmentation.}
Data augmentation typically ignores the generative process behind the data, risking unrealistic synthetic examples. Causal-based approaches \citep{kocaoglu2018causalgan,arjovsky2019invariant} propose integrating structural assumptions so that augmentations preserve invariant relationships across environments. This has been explored in computer vision through interventions on object attributes, and in language by editing tokens in a \texttt{do}-intervention style.
Closely related at the framing level, concurrent work by \citet{akbar2025an} formalizes outcome-invariant augmentation as a soft intervention on the treatment-generating mechanism, requiring that the augmentation preserves the outcome function itself ($f(gx) = f(x)$). CRDA takes a complementary route where instead of assuming a known symmetry of the predictor, it preserves the \emph{instance-specific noise term} under perturbations of selected features.
This noise-preservation principle draws a direct parallel to work in reinforcement learning. \citet{lu2020sample} showed that next-state samples remain identifiable under mild assumptions (monotonicity and independence in the noise term). Their \textit{Theorem~1} establishes that, once the observed outcome fixes a particular noise quantile, reusing that noise in a “what-if” scenario yields a valid counterfactual next-state. 
Similarly, CRDA treats the calculated residual as an exogenous noise variable that is assumed to be independent of the features being perturbed. This allows us to produce new data in a more systematic and grounded way than purely generative or interpolation-based techniques.

\section{Background}
\label{sec:background}

In this section, we provide an overview of the key concepts that motivate our proposed approach.

\subsection{Counterfactual Reasoning and Structural Causal Models}
\label{sec:background_causal}

A \emph{structural causal model} (SCM) \citep{pearl2009causality, peters2017elements} formalizes how observed variables are generated by underlying data-generating processes (DGP). Formally, an SCM is specified by a tuple \(\bigl(\mathcal{X}, \mathcal{Z}, F, P_{\mathcal{Z}}\bigr)\), where:
\begin{itemize}
    \item \(\mathcal{X} = \{X_1, \ldots, X_m\}\) is the set of endogenous (observed) variables,
    \item \(\mathcal{Z} = \{Z_1, \ldots, Z_m\}\) is the set of exogenous (noise) variables with distribution \(P_{\mathcal{Z}}\),
    \item \(F = \{f_i\}_{i=1}^m\) is a collection of structural equations, each of the form
    \[
        x_i \;\leftarrow\; f_i\bigl(pa_i,z_i) 
        %= g(\mathcal{X}-\{X_i\}) + Z_i,
    \]
    where \(\mathrm{Pa}_i\subseteq \mathcal{X}\setminus\{X_i\}\) denotes the parents of \(X_i\).
\end{itemize}

An SCM induces a graph, which encodes causal relationships (i.e.\ who influences whom), and the exogenous noise terms capture stochasticity.

\textbf{Interventions and causal effects.}
A central notion in causal inference is that of an \emph{intervention}, written \(\operatorname{do}(\cdot)\) \citep{pearl2009causality}. By applying \(\operatorname{do}(X = x')\), one replaces the original structural equation \(X \leftarrow f_X(\mathrm{Pa}, Z)\) with a constant assignment \(X \leftarrow x'\). This operation severs incoming edges to \(X\), thus altering the downstream (child) variables but leaving other aspects of the system intact. Interventions enable us to reason about population-level outcomes under a forced assignment.

\textbf{Counterfactuals.}
While interventions ask about population-level effects,
counterfactual reasoning answers instance-level ``what if'' questions: \emph{given this specific outcome I observed, what would have happened if some feature had been different?} \citep{pearl2009causality,peters2017elements}. Concretely, one first infers the actual setting of \(Z = z\), then imagines how the outcome \(Y\) would change under a hypothetical intervention \(\operatorname{do}(X = x')\). This process involves:

\emph{(i)} \textbf{abduction}, where we infer \(\mathbf{z}\) from the observed data,

\emph{(ii)} \textbf{action}, where we intervene and reassign \(X\),

\emph{(iii)} \textbf{prediction}, where we propagate \(z\) through the modified system to obtain the counterfactual outcome distribution $P(Y'|X = x',Z = z)$.

\subsection{Main Assumptions and Theory}
\label{sec:assumptions}

Throughout the paper, we assume that the data-generating process follows an \textbf{additive-noise structural causal model (SCM)}, which decomposes the outcome $Y$ into a systematic component and a residual component. The systematic component is the conditional expectation estimated by a base regressor $g$: 

\begin{equation}
    Y = g(X_P, X_R) + Z %; Z \mbox{ independent of } {X_P|X_R}
\end{equation}

The core theoretical principle underpinning CRDA is the assumption of \emph{residual invariance}. This principle posits that for the true conditional-expectation regressor, the residual noise term remains distributionally constant under interventions on a specific subset of features. We formalize this as follows.

\begin{assumption} \label{assump:cond}
Let the feature vector $X$ be partitioned into two disjoint subsets, $X = (X_P, X_R)$, where $X_P$ are the features we intend to perturb (the \emph{perturbable} coordinates) and $X_R$ are the features we hold fixed (the \emph{remaining} coordinates). Let $g(X) = \mathbb{E}[Y|X]$ be the true conditional expectation function, and let $Z = Y - g(X)$ be the corresponding structural noise term. Residual invariance requires that the noise $Z$ is conditionally independent of the perturbable features $X_P$ given the fixed features $X_R$:
\begin{align}
    \Prob(Z \mid X_P, X_R) = \Prob(Z \mid X_R) \label{eq:indep}
\end{align}
%Equation \ref{eq:indep} says  
\end{assumption}

\begin{figure}[h]
\centering
\begin{tikzpicture}[
    node distance=1.5cm and 1.5cm,
    observed/.style={circle, draw, minimum size=0.9cm, inner sep=0pt, fill=white},
    latent/.style={circle, draw, dashed, minimum size=0.9cm, inner sep=0pt, fill=gray!10},
    arrow/.style={->, >=latex, thick},
    label_text/.style={font=\small\bfseries}
]

% ---------------------------------------------------------
% PANEL A: ASSUMPTION SATISFIED
% ---------------------------------------------------------
\begin{scope}[local bounding box=GraphA]
    % Nodes
    \node[observed] (XR) {$X_R$};
    \node[observed, below=0.25cm of XR] (XP) {$X_P$};
    \node[observed, right=1.8cm of $(XR)!0.5!(XP)$] (Y) {$Y$};
    \node[latent, above right=0.8cm and 0.5cm of Y] (Z) {$Z$};

    % Edges
    \draw[arrow] (XR) -- (Y);
    \draw[arrow] (XP) -- (Y);
    \draw[arrow] (Z) -- (Y);
    
    % The "Allowed" dependency: XR -> Z
    \draw[arrow, dashed, bend left=20] (XR) to node[midway, above, font=\tiny, text=gray, align=center] {Dependency\\Allowed} (Z);
    
    % Label
    \node[below=0.6cm of Y, label_text] {(a) Assumption Satisfied};
\end{scope}

% ---------------------------------------------------------
% PANEL B: ASSUMPTION VIOLATED
% ---------------------------------------------------------
% CHANGED: Swapped xshift for yshift to move it down instead of right
\begin{scope}[yshift=-4cm, local bounding box=GraphB]
    % Nodes (Same positions relative to this scope)
    \node[observed] (XR2) {$X_R$};
    \node[observed, below=0.25cm of XR2] (XP2) {$X_P$};
    \node[observed, right=1.8cm of $(XR2)!0.5!(XP2)$] (Y2) {$Y$};
    \node[latent, above right=0.8cm and 0.5cm of Y2] (Z2) {$Z$};
    
    % The Unobserved Confounder
    \node[latent, above left=0.5cm and 1.0cm of XP2] (U) {$U$};

    % Standard Edges
    \draw[arrow] (XR2) -- (Y2);
    \draw[arrow] (XP2) -- (Y2);
    \draw[arrow] (Z2) -- (Y2);
    % Allowed part
    \draw[arrow, dashed, bend left=20] (XR2) -- (Z2); 

    % The Violation Edges
    \draw[arrow, red, thick] (U) -- (XP2);
    \draw[arrow, red, thick, bend left=25] (U) to node[midway, above, font=\tiny, text=red] {Absorbed by} (Z2);

    % Label
    \node[below=0.6cm of Y2, label_text] {(b) Assumption Violated};
    \node[below=0.2cm of Y2, font=\scriptsize, align=center, text=red] {(Confounder $U$ links $X_P$ and $Z$)};
\end{scope}

\end{tikzpicture}
\caption{\textbf{Causal Visualization of Residual Invariance.} 
\textbf{(a)} The structure satisfies Assumption~\ref{assump:cond}.
\textbf{(b)} A violation due to unobserved confounding. Here, a latent variable $U$ causes both $X_P$ and $Y$. Since the residual $Z$ absorbs the variation of $U$, a dependency is created between $X_P$ and $Z$, invalidating the augmentation.}
\label{fig:causal_graphs}
\end{figure}

\textbf{Causal Interpretation and Hidden Confounding.}
To better understand what underlying DGPs meet Assumption~\ref{assump:cond}, Figure~\ref{fig:causal_graphs}a visualizes a causal structure that satisfies it. 
The assumption requires that the features selected for perturbation, $X_P$, are \textit{exogenous} with respect to the residual mechanism $Z$. 
Note that $Z$ is allowed to depend on the fixed features $X_R$ (e.g., heteroscedasticity or confounding on $X_R$), as long as it remains independent of $X_P$. 
Appendix~\ref{app:causal} shows that the causal (d-separation) analogue of Assumption~\ref{assump:cond} implies the following:  %property of perturbed features: 
if $Z$ is the only latent direct cause of $Y$, a perturbed feature $X \in X_P$ is either itself a direct cause of $Y$  or is independent of (d-separated from) $Y$ given the observed direct causes. Thus Assumption~\ref{assump:cond} ensures that the only perturbed features used to predict $Y$ are direct causes of $Y$ (cf. Figure~\ref{fig:causal_graphs}a).

Recent work highlights that unobserved confounding is a primary driver of distribution shift failures in real-world tabular data~\citep{prashant2025scalable, reddy2026when}.
When a latent confounder $U$ exists, its influence on $Y$ that is not explained by $X$ is absorbed into the residual term $Z$. Consequently, $Z$ acts as a noisy proxy for $U$. Figure~\ref{fig:causal_graphs}b illustrates the case where $U$ causes both $X_P$ and $Y$, thereby creating a ``backdoor path'' $X_P \leftarrow U \to Y$. Here, a statistical dependence arises between $X_P$ and $Z$, violating Assumption~\ref{assump:cond}. 
Therefore, the validity of counterfactual augmentation depends on identifying and excluding such confounded features from the perturbation set.

\subsection{From Population to Empirical Residuals}
\label{sec:empirical_residuals}
The discussion so far has been at the population level. We now consider what happens when CRDA operates on finite-sample estimates of $g$ and $Z$.
Assumption~\ref{assump:cond} is stated in terms of the true conditional expectation $g(X) = \mathbb{E}[Y \mid X]$ and the true structural noise $Z = Y - g(X)$. In practice, CRDA does not have access to $g$; it uses a finite-data estimate $\hat{g}$ and computes \emph{empirical} residuals $\hat{Z} = Y - \hat{g}(X)$. The relationship between empirical and true residuals is
\[
    \hat{Z} \;=\; Z + \bigl(g(X) - \hat{g}(X)\bigr) \;=\; Z + e(X_P, X_R),
\]
where $e(X_P, X_R)$ is the pointwise approximation error of the base model. The validity of CRDA on empirical residuals depends on how this error behaves on the perturbable coordinates.

\textbf{Well-specified regime.} When $\hat{g}$ is sufficiently accurate, $e \to 0$ uniformly and $\hat{Z} \to Z$. Assumption~\ref{assump:cond} transfers to the empirical residuals: $\hat{Z} \perp X_P \mid X_R$.

\textbf{Misspecified regime.} When $\hat{g}$ fails to capture the systematic effect of $X_P$, either because the base model underfits or because $X_P$ has a complex non-linear contribution that requires more data than is available, $e$ retains systematic dependence on $X_P$ even after conditioning on $X_R$. Then $\hat{Z} \not\perp X_P \mid X_R$, and Assumption~\ref{assump:cond} fails on the empirical residuals.

This split governs when CRDA's residual-reuse construction is justified at the empirical level; the practical filtering and validation mechanisms that target the misspecified regime are introduced in Section~\ref{sec:method}.

\section{Method}
\label{sec:method}

In this section, we detail our proposed \emph{Counterfactual Residual Data Augmentation (CRDA)} procedure for tabular regression. The main goal is to augment a limited dataset by synthesizing new samples that remain true to the original noise distribution.
Algorithm~\ref{alg:crda} outlines the full workflow.
For a visual depiction of the pipeline and how the residual and systematic components interact, see Figure~\ref{fig:crda_pipeline} in Appendix~\ref{app:pipeline_diagram}.

\subsection{Algorithmic Overview}

\begin{algorithm}[h]
\caption{Counterfactual Residual Data Augmentation (CRDA)}
\label{alg:crda}
\begin{algorithmic}[1]
\Require Dataset $\mathcal{D}$, baseline regressor $\hat{g}(\cdot)$.
\Statex \quad \;\; Hyperparameters: \textsc{PerturbationRange} ($p$), \textsc{AugDataSizeFactor} ($M$).

\State \textbf{Split} $\mathcal{D}$ into $\mathcal{D}_{\mathrm{train}}$ and $\mathcal{D}_{\mathrm{test}}$.
\State \textbf{Train} baseline $\hat{g}(\cdot)$ on $\mathcal{D}_{\mathrm{train}}$.

\For{each $(\mathbf{x}_i, y_i)$ in $\mathcal{D}_{\mathrm{train}}$}
    \State $\hat{z}_i \gets y_i - \hat{g}(\mathbf{x}_i)$  \Comment{Compute residual}
\EndFor
% \State \textbf{Compute Residuals:} $z_i \gets y_i - g(\mathbf{x}_i)$ for all $i$.

\State \textbf{Select partition} $(X_P,X_R)$:
\begin{itemize}[leftmargin=1em]\setlength\itemsep{0em}
    \item PC algorithm to remove features directly connected to $\hat{Z}$.
    \item Correlation check to remove features strongly associated with $\hat{Z}$.
\end{itemize}
\If{$X_P = \emptyset$} 
    \Return $g$ \Comment{No partition found}
\EndIf

\State $\mathcal{D}_{\mathrm{aug}} \gets \emptyset$
\For{each $(\mathbf{x}_i, y_i) \in \mathcal{D}_{\mathrm{train}}$}
    \For{$m = 1$ to $M$}
        \State $\mathbf{x}'_{i,P} \gets \mathbf{x}_{i,P} \cdot (1 + \text{Unif}[-p, p])$
        \State $\mathbf{x}'_i \gets (\mathbf{x}'_{i,P}, \mathbf{x}_{i,R})$ \Comment{Keep $X_R$ fixed}
        \State $y'_i \gets \hat{g}(\mathbf{x}'_i) + \hat{z}_i$ \Comment{Preserve residual}
        \State $\mathcal{D}_{\mathrm{aug}} \gets \mathcal{D}_{\mathrm{aug}} \cup \{(\mathbf{x}'_i, y'_i)\}$
    \EndFor
\EndFor

\State \textbf{Validation:} Perform K-fold CV comparing \emph{unaugmented} vs.\ \emph{augmented} models.
\State Collect validation errors $\{e_{\mathrm{unaug}}^{(k)}, e_{\mathrm{aug}}^{(k)}\}_{k=1}^K$. 
\State $p\text{-value} \gets \text{WilcoxonSignedRank}\bigl(\{e_{\mathrm{unaug}}^{(k)}, e_{\mathrm{aug}}^{(k)}\}\bigr)$
\If{$p\text{-value} < \alpha$}
    \State \textbf{Train} new regressor $\hat{g}'$ on $\mathcal{D}_{\mathrm{train}} \cup \mathcal{D}_{\mathrm{aug}}$
    \State \Return $\hat{g}'$
\Else
    \State \Return $\hat{g}$ \Comment{Augmentation rejected}
\EndIf
\end{algorithmic}
\end{algorithm}

\textbf{Step 1: Data Splitting and Baseline Training.}
We begin by splitting the original dataset~$\mathcal{D}$ into training and test sets. Let 
\(
  \mathcal{D}_{\mathrm{train}} = \{(\mathbf{x}_i, y_i)\}_{i=1}^N
\)
and 
\(
  \mathcal{D}_{\mathrm{test}} = \{(\mathbf{x}_j, y_j)\}_{j=N+1}^{n}
\).
We then fit a \emph{baseline} regressor $\hat{g}(\cdot)$ on $\mathcal{D}_{\mathrm{train}}$. In practice, this model can be chosen from a variety of families (e.g.\ MLP, XGBoost) depending on the user’s preference.

\textbf{Step 2: Residual Computation.}
For each training sample $(\mathbf{x}_i,y_i)$, we compute the \emph{residual}
\[
   \hat{z}_i \;=\; y_i \;-\; \hat{g}(\mathbf{x}_i).
\]
Intuitively, $\hat{z}_i$ captures the latent factors not explained by $\hat{g}$.

\textbf{Step 3: Feature Selection.}
To satisfy Assumption~\ref{assump:cond}, we must identify which features belong to $X_P$ (perturbable) and which must remain in $X_R$ (fixed). Since the true causal graph is rarely known, we sequentially apply two filters to screen for (approximate) conditional independence between features and the residual $\hat{z}$:

\begin{enumerate}
    \item A \textbf{causal graph check} applies the Peter-Clark (PC) algorithm \citep{spirtes2000causation} to the joint set of variables $\{X, Y, \hat{Z}\}$. Features that have a direct edge to $\hat{Z}$ in the discovered graph are flagged as dependent and assigned to $X_R$. The remaining features are still potential candidates for $X_P$.
    \item A \textbf{Pearson correlation check} assigns features strongly correlated with $\hat{Z}$ (above a threshold) to $X_R$.
\end{enumerate}
The surviving coordinates form $X_P$; the complement forms $X_R$, satisfying Assumption \ref{assump:cond}. (If none survive, we skip augmentation and return the baseline $\hat{g}$.)

\textbf{Step 4: Counterfactual Generation.}
For each training sample $(\mathbf{x}_i, y_i)$, we generate $M$ synthetic samples (controlled by the hyperparameter \textsc{AugDataSizeFactor}). For each synthetic sample $m \in \{1, \dots, M\}$:
\begin{enumerate}
    \item We perturb the features in $X_P$ using a uniform scaling factor $\delta \sim \text{Unif}[-p, p]$, where $p\in(0,1)$ is the \textsc{PerturbationRange}:
    \[
        \mathbf{x}'_{i,P} = \mathbf{x}_{i,P} \cdot (1 + \delta).
    \]
    \item We construct the new feature vector $\mathbf{x}'_i = (\mathbf{x}'_{i,P}, \mathbf{x}_{i,R})$, keeping $X_R$ fixed.
    \item We calculate the counterfactual label by passing the \emph{new} input through the baseline model and adding the \emph{old} residual:
    \[
        y'_i = \hat{g}(\mathbf{x}'_i) + \hat{z}_i.
    \]
\end{enumerate}

Crucially, this preserves the original residual $\hat{z}_i$, thus keeping the overall noise structure intact under the perturbation. % (Proposition \ref{thrm:invariance}).
The newly generated samples $\bigl(\mathbf{x}_i',\,y_i'\bigr)$ form an augmented set \(\mathcal{D}_{\mathrm{aug}}\).

% \textbf{Step 6: Validation via Cross-Validation and Wilcoxon Signed-Rank Test.}
\textbf{Step 5: Safety Validation.}
Before committing to a final retraining, we evaluate whether the augmented samples \emph{significantly} improve generalization. Concretely, we run $K$-fold cross-validation on the \emph{original} training data, comparing two models:
\begin{enumerate}
    \item \textbf{Unaugmented model}: trained on the fold’s training portion as is.
    \item \textbf{Augmented model}: trained on the fold’s training portion \emph{plus} its augmented points (generated using the same procedure above).
\end{enumerate}
We collect the validation errors (e.g.\ MSE) across the $K$ folds for both models and perform a nonparametric \emph{Wilcoxon signed-rank test} \citep{wilcoxon1945individual} on the paired errors. If the resulting $p$-value is below a chosen significance level (e.g.\ $\alpha = 0.05$), we conclude that augmentation yields a statistically significant improvement; otherwise, we revert to the baseline $\hat{g}$.

\textbf{Step 6: Combined Dataset and Retraining.}
If the Wilcoxon test finds a significant improvement, we combine $\mathcal{D}_{\mathrm{train}}$ and $\mathcal{D}_{\mathrm{aug}}$ into a single augmented dataset
\[
   \mathcal{D}_{\mathrm{train}}^{\mathrm{aug}}
   \;=\;
   \mathcal{D}_{\mathrm{train}}
   \;\cup\;
   \mathcal{D}_{\mathrm{aug}}.
\]
We then retrain a \emph{final} model $\hat{g}'(\cdot)$ with this expanded dataset to use on the test set.

\subsection{Discussion of Key Design Choices}

\textbf{Two Views of CRDA: Statistical and Counterfactual.}

\emph{Statistical (non-causal) view.} For soundness, CRDA requires only Assumption~\ref{assump:cond}: $Z \perp X_P \mid X_R$. Given an estimated regressor $\hat{g}$, empirical residuals $\hat{z}_i = y_i - \hat{g}(x_i)$, a screened subset $X_P$, and a perturbed input $x'_i = (x'_{i,P}, x_{i,R})$, the augmented label $y'_i = \hat{g}(x'_i) + \hat{z}_i$ is a sample whose conditional noise distribution matches the original's by Assumption~\ref{assump:cond}. No causal claim is required to motivate the procedure.

\emph{Counterfactual view (analogy under additive noise).} When the decomposition $Y = g(X_P, X_R) + Z$ acts as an additive-noise SCM and $X_P$ is exogenous with respect to $Z$, reusing $\hat{z}_i$ while changing $x_{i,P}$ to $x'_{i,P}$ mirrors \citet{pearl2009causality}'s abduction--action--prediction template (Section~\ref{sec:background_causal}): abduct $\hat{z}_i$ from the observed sample, act on $X_P$, predict with the same noise. 
In Appendix~\ref{app:causal} we establish a precise relationship between CRDA and counterfactuals in Pearl's sense. Assuming that the true causal graph satisfies the d-separation analogue of Assumption~\ref{assump:cond} and that the noise term $Z$ is the only unobserved parent of $Y$, CRDA estimates the counterfactual probability $P(Y_{X=x'} \mid X=x, Y=y)$ at the population level. The finite-sample caveats of Section~\ref{sec:empirical_residuals} continue to apply.

\textbf{Reusing the Residual.} Setting $y'_i = \hat{g}(x'_i)$ alone would implicitly fix the noise to zero for every synthetic sample, teaching the model that the data manifold is deterministic and erasing the natural heteroscedasticity of the target. Reusing $\hat{z}_i$ preserves the instance-specific noise structure and distinguishes CRDA from naive feature perturbation.

\textbf{Residual Invariance Filters.}
The practical steps in Algorithm~\ref{alg:crda}, such as the PC algorithm and correlation checks, are \textit{empirical screens} designed to identify a feature subset $X_P$ for which Assumption \ref{assump:cond} is likely to hold. The choice of the filters themselves is a design decision and can be swapped for similar techniques. In Appendix~\ref{app:divergence}, we empirically validate our filters by showing that rejected features possess significantly higher Mutual Information with the residuals than selected features. When we reuse a specific residual $\hat{z}_i$ to construct a counterfactual label $y' = \hat{g}(x') + \hat{z}_i$, we make a practical choice that avoids the need to explicitly model the entire noise distribution.

\textbf{Model-Agnostic Nature.}
Although our algorithm is illustrated with a neural or tree-based baseline, the same counterfactual logic applies to \emph{any} parametric or nonparametric regressor: each training outcome is viewed as a sum of a learned systematic component and a noise term, and perturbations occur in the input space while the residual stays anchored to its original data point. While CRDA can be applied to any regression model, it does not \emph{guarantee} improvement on every such model. To guard against cases where augmentation would not help, CRDA defaults to the untouched baseline whenever its safeguards trigger.

\section{Experiments}
\label{sec:experiments}

\subsection{Experimental Setup and Protocol}

\textbf{High-Level Goal.}
Our primary goal is to investigate whether CRDA can reduce test MSE on tabular regression tasks, particularly under conditions of data scarcity. We evaluate performance across nine benchmark datasets and a synthetic task with a known ground-truth DGP.

\textbf{Comparisons and Baselines.}
\begin{enumerate}
    \item \textbf{No Augmentation (Baseline)}: Train the model on the raw data only.
    \item \textbf{CRDA Augmentation}: Train the same model class on the union of the raw data and the counterfactual samples generated by CRDA.
    \item \textbf{Generative Model Augmentation}: The baseline model architecture trained on the union of the original data and synthetic samples produced by state-of-the-art tabular data generators
\end{enumerate}

\textbf{Hyperparameter Search.}
We tune \emph{MLPRegressor} and \emph{XGBoostRegressor} with scikit-learn's \texttt{RandomizedSearchCV} \citep{pedregosa2011scikit} (3-fold) for 20 trials each, totaling \textit{60 fits} per baseline. \emph{CRDA} also introduces three additional knobs: \textsc{AugDataSizeFactor}, \textsc{PerturbationRange} and \textsc{MaxNumFeaturesToPerturb}. We tune these via \emph{Optuna} (TPE sampler) \citep{akiba2019optuna} for up to \textit{30} trials, similarly minimizing validation error \footnote{Complete search spaces and selected hyperparameters for the base regressors, together with the configurations used for all augmentation baselines, are detailed in Appendix~\ref{app:hparams}.}.

\textbf{Datasets and Preprocessing.}
We consider nine regression datasets from the University of California, Irvine (UCI) Machine Learning Repository, the Penn Machine Learning Benchmarks (PMLB) collection, and Kaggle \footnote{Summary statistics and sources of datasets are in Appendix~\ref{app:data}.}.
These were chosen to represent a variety of numeric, tabular domains and sample sizes. We drop duplicate rows and NaN values, then apply standardization per feature. For each dataset, we produce five training subsets, ranging from $n/5$ up to $n$. All subsets are split 80--20 into training and test sets.

\textbf{Evaluation Metrics and Significance Tests.}
We report the \textbf{MSE} (mean-squared error on the held-out test set) for our settings and their relative change (negative values indicate improvement). 
$$
\boldsymbol{\Delta\%}=100\,(\text{MSE}_{\mathrm{CRDA}}-\text{MSE}_{\mathrm{baseline}})/\text{MSE}_{\mathrm{baseline}}\
$$

Significance is assessed with a Wilcoxon signed-rank test across 10 CV folds \citep{wilcoxon1945individual}.

\subsection{Results and Analysis}

Our main findings are presented in Table~\ref{tab:baseline_results} and summarized in Figure~\ref{fig:avg_change_mse}. We see that CRDA provides consistent and substantial reductions in test MSE across nearly all datasets and training set sizes.
As shown in Figure~\ref{fig:avg_change_mse}, the \textbf{MLP Regressor benefits most significantly}, achieving an average MSE reduction of \textbf{22.9\%} across all nine datasets. This highlights CRDA's ability to provide valuable signal for data-hungry neural models. For instance, on the \textit{Parkinson's Monitoring} and \textit{House Price} datasets, MLP models augmented with CRDA see their error reduced by over 30\%. The \textbf{XGB Regressor} also shows consistent improvement, with an average MSE reduction of \textbf{6.4\%}.

Table~\ref{tab:baseline_results} details the performance at different data scales, averaged on 15 seeds \footnote{Full per-seed results with standard errors are in Appendix~\ref{app:perseed}}. Green cells indicate cases in which the Wilcoxon signed-rank test would have approved augmentation, and red cells indicate cases where it would have abstained. This built-in safeguard prevents CRDA from being applied where it might not be beneficial. While infrequent, we do note that this filter can be prone to error, particularly depending on sample size. Per-cell significance heat-maps are provided in Appendix~\ref{app:wilcoxon}.

\begin{figure*}[t]
  \centering
  \includegraphics[width=0.89\linewidth]{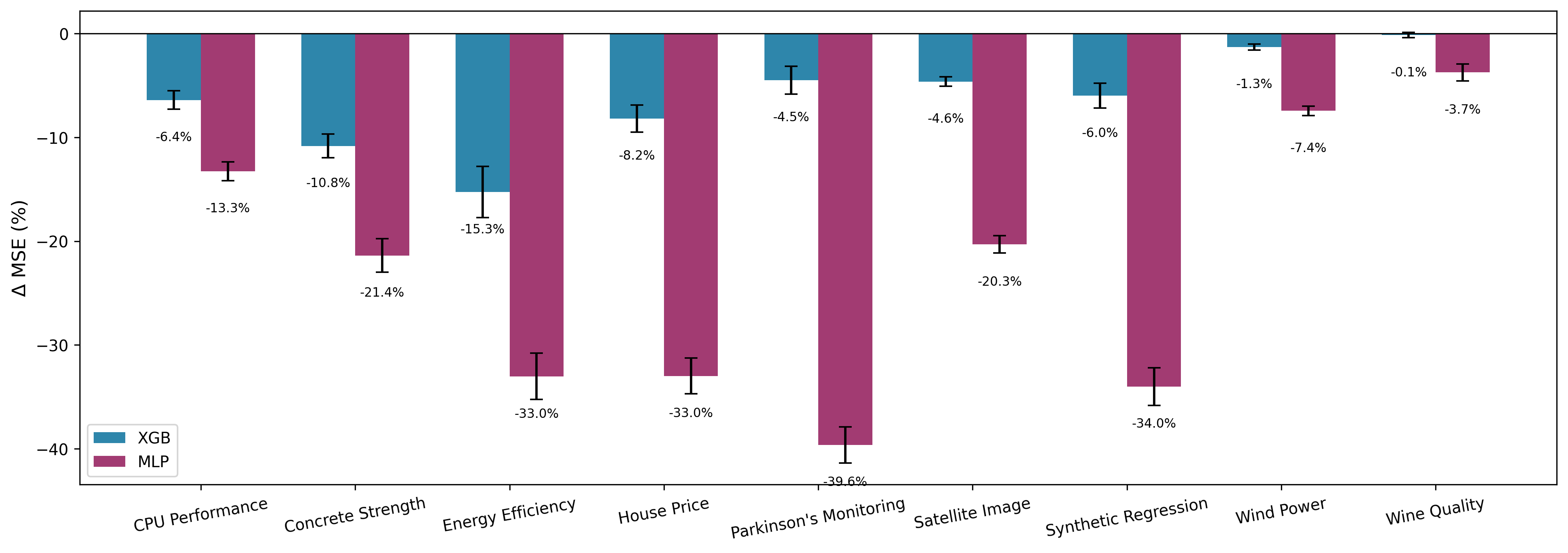}
  \caption{MSE percentage change for each dataset averaged over the five different training-subset sizes reported in Table~\ref{tab:baseline_results} with error bars corresponding to standard error. Lower is better.}
  \label{fig:avg_change_mse}
\end{figure*}

\begin{table}[!h]
  \scriptsize
\renewcommand{\arraystretch}{0.83}
  \centering
  \setlength\tabcolsep{0.95pt}
  \caption{Average change in MSE after applying CRDA for XGB and MLP across 15 seeds.
           Cells are green when data augmentation
           was selected to proceed more often based on the Wilcoxon signed rank test and red otherwise. The reported $\Delta$ values are computed with augmentation applied in all cases, independent of this decision. Lower is better $\downarrow$.}
  \label{tab:baseline_results}
  \begin{tabular}{l c c c}
    \toprule
    \textbf{Dataset} & \textbf{Sample Size} & \textbf{XGB MSE} $\Delta$\,\% & \textbf{MLP MSE} $\Delta$\,\% \\
    \midrule

% ---------------- CPU Performance ------------------------------------
\multirow{5}{*}{\shortstack[l]{CPU Performance \\ \citep{pmlb_227_cpu_small}}}  &  1638  &  \good{-6.99}  &  \good{-20.24} \\
    &  3276  &  \good{-9.47}  &  \good{-14.03} \\
    &  4914  &  \good{-6.20}  &  \good{-11.31} \\
    &  6552  &  \good{-4.13}  &  \good{-10.48} \\
    &  8190  &  \good{-5.19}  &  \good{-10.23} \\
% ---------------- Satellite Image ------------------------------------
\midrule
\multirow{5}{*}{\shortstack[l]{Satellite Image \\ \citep{pmlb_294_satellite_image}}}  &  1287  &  \good{-4.54}  &  \good{-18.36} \\
    &  2574  &  \good{-3.73}  &  \good{-16.69} \\
    &  3861  &  \good{-4.79}  &  \good{-23.14} \\
    &  5148  &  \good{-4.73}  &  \good{-23.72} \\
    &  6435  &  \good{-5.31}  &  \good{-19.66} \\
% ---------------- Wind Power ------------------------------------
\midrule
\multirow{5}{*}{\shortstack[l]{Wind Power \\ \citep{haslett1989_wind}}}  &  1314  &  \good{-2.82}  &  \good{-7.22} \\
    &  2628  &  \good{0.20}  &  \good{-9.17} \\
    &  3942  &  \good{-1.33}  &  \good{-9.03} \\
    &  5256  &  \good{-1.40}  &  \good{-6.15} \\
    &  6570  &  \good{-1.08}  &  \bad{-5.56} \\
% ---------------- Synthetic Regression ------------------------------------
\midrule
\multirow{5}{*}{\shortstack[l]{Synthetic Regression \\ \citep{pmlb_623_fri_c4_1000_10}}}  &  200  &  \good{-12.00}  &  \good{-28.80} \\
    &  400  &  \good{-3.16}  &  \good{-36.93} \\
    &  600  &  \good{-7.94}  &  \good{-27.91} \\
    &  800  &  \good{-2.23}  &  \good{-34.12} \\
    &  1000  &  \good{-4.59}  &  \good{-42.33} \\
% ---------------- Concrete Strength ------------------------------------
\midrule
\multirow{5}{*}{\shortstack[l]{Concrete Strength \\ \citep{uci_concrete_yeh1998}}}  &  201  &  \good{-8.01}  &  \good{-17.80} \\
    &  402  &  \good{-8.43}  &  \good{-19.83} \\
    &  603  &  \good{-9.75}  &  \good{-17.64} \\
    &  804  &  \good{-15.72}  &  \good{-24.77} \\
    &  1005  &  \good{-12.19}  &  \good{-26.90} \\
% ---------------- Energy Efficiency ------------------------------------
\midrule
\multirow{5}{*}{\shortstack[l]{Energy Efficiency \\ \citep{tsanas2012_energy}}}  &  153  &  \good{-13.33}  &  \good{-25.10} \\
    &  306  &  \good{-12.20}  &  \good{-28.13} \\
    &  459  &  \good{-10.55}  &  \good{-42.98} \\
    &  612  &  \good{-19.35}  &  \good{-40.71} \\
    &  765  &  \good{-20.96}  &  \good{-28.31} \\
% ---------------- HousePrice ------------------------------------
\midrule
\multirow{5}{*}{\shortstack[l]{House Price \\ \citep{kaggle2024_houseprice}}}  &  200  &  \good{-14.23}  &  \good{-40.57} \\
    &  400  &  \good{-5.39}  &  \good{-37.02} \\
    &  600  &  \good{-4.87}  &  \good{-30.14} \\
    &  800  &  \good{-9.86}  &  \good{-30.32} \\
    &  1000  &  \good{-6.50}  &  \good{-26.97} \\
% ---------------- Parkinson's Monitoring ------------------------------------
\midrule
\multirow{5}{*}{\shortstack[l]{Parkinson's Monitoring \\ \citep{tsanas2009_parkinsonstele}}}  &  1175  &  \good{-8.40}  &  \good{-36.17} \\
    &  2350  &  \good{-6.60}  &  \good{-31.82} \\
    &  3525  &  \good{-2.79}  &  \good{-36.60} \\
    &  4700  &  \good{-6.26}  &  \good{-46.40} \\
    &  5875  &  \bad{1.65}  &  \good{-47.23} \\
% ---------------- WineQuality ------------------------------------
\midrule
\multirow{5}{*}{\shortstack[l]{Wine Quality \\ \citep{cortez2009_winequality}}}  &  1063  &  \good{0.31}  &  \good{-0.34} \\
    &  2126  &  \good{1.01}  &  \good{-5.24} \\
    &  3189  &  \good{-0.33}  &  \good{-3.63} \\
    &  4252  &  \good{-0.61}  &  \good{-4.44} \\
    &  5315  &  \good{-1.08}  &  \good{-4.99} \\
    \bottomrule
  \end{tabular}
\end{table}

We further validate the necessity of our specific design choices in Appendix~\ref{app:simple_ablations}, showing that CRDA outperforms naive global perturbation and label-invariant baselines.

To better understand how CRDA's effectiveness scales with sample size in a controlled setting, we conducted an experiment on a synthetic DGP with a known ground-truth independence structure:
$$
Y = X_1^2 + X_2 X_3 + Z, \qquad \text{where } Z \perp (X_1, X_2, X_3)
$$
We generated 50,000 samples and applied CRDA at various sample sizes. The results, shown in Figure~\ref{fig:sample-size-synthetic}, reveal that at very low sample sizes ($<$2.5k), CRDA offers minimal benefit because the base predictor is too inaccurate to produce meaningful residuals. Conversely, at very high sample sizes ($>$30k), the baseline model is already so accurate that there is little room for improvement. The \textit{greatest MSE reduction occurs in a ``sweet spot"} (between 2.5k and 20k samples), where the baseline model has learned the main signal but still benefits from the localized exploration of the feature space that CRDA provides. 
This dependency on model capacity is further validated in Appendix~\ref{app:catboost} using \emph{CatBoost} \citep{prokhorenkova2018catboost}. Being more data-efficient, CatBoost's utility peaks at even lower sample sizes (peaking around $N \approx 500$ for datasets like \textit{Parkinson's} and \textit{Energy Efficiency}), confirming that the optimal CRDA window shifts earlier as the base learner becomes stronger.

\begin{figure*}[t]
  \centering
  \includegraphics[width=0.89\linewidth]{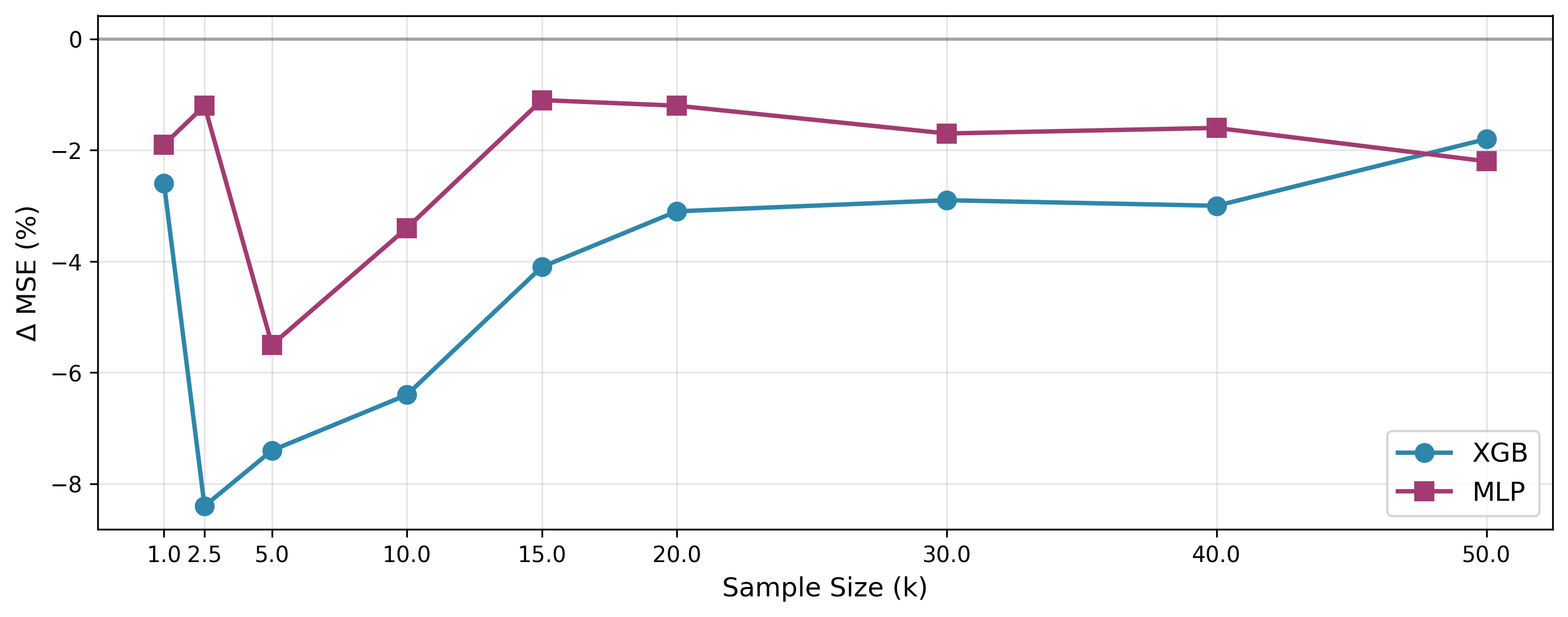}
  \caption{Synthetic sample-size-scaling experiment with a DGP of known independence for the residuals $Z$ and features $X$. For both XGB and MLP base models, we observe a ``sweet spot" where CRDA yields the largest MSE reduction (typically at a lower sample size).}
  \label{fig:sample-size-synthetic}
\end{figure*}

\begin{table*}[!h]
    \centering
    \caption{The percent MSE change for XGB and MLP base regressors. We compare CRDA against specialized regression augmentations (C-Mixup \citep{yao2022cmixup}, ADA \citep{schneider2023anchor}) and generative models (TabDDPM \citep{kotelnikov2023tabddpm}, TVAE \citep{xu2019ctgan}, CTGAN \citep{xu2019ctgan}). Averaged across 10 seeds, reporting standard error. Lower is better $\downarrow$.}
    \label{tab:data_generators}
    \resizebox{\textwidth}{!}{%
    \begin{tabular}{llcccccc}
      \toprule
      \multirow{2}{*}{\textbf{Dataset}} & \multirow{2}{*}{\textbf{Model}} & \multicolumn{6}{c}{\textbf{\% MSE Change} $\downarrow$} \\
      \cmidrule(lr){3-8}
       & & $\Delta_{\mathrm{C-Mixup}}$ & $\Delta_{\mathrm{ADA}}$ & $\Delta_{\mathrm{TabDDPM}}$ & $\Delta_{\mathrm{TVAE}}$ & $\Delta_{\mathrm{CTGAN}}$ & $\Delta_{\mathrm{CRDA}}$ \\
      \midrule
  
  % ---------------- CPU Performance ------------------------------------
  \multirow{2}{*}{\makecell[l]{CPU Performance \\ \citep{pmlb_227_cpu_small}}}
   & XGB & 1.7 $\pm$ 1.5 & 1.9 $\pm$ 0.6 & 36.5 $\pm$ 4.0 & 24.2 $\pm$ 2.6 & 41.6 $\pm$ 2.8 & \textbf{-1.4 $\pm$ 0.7} \\
   & MLP & -0.9 $\pm$ 1.0 & -0.6 $\pm$ 1.0 & 30.2 $\pm$ 7.1 & 36.6 $\pm$ 4.6 & 108.0 $\pm$ 13.3 & \textbf{-12.2 $\pm$ 1.0} \\
  \cmidrule(l){2-8}
  % ---------------- Satellite Image ------------------------------------
  \multirow{2}{*}{\makecell[l]{Satellite Image \\ \citep{pmlb_294_satellite_image}}}
   & XGB & 6.4 $\pm$ 1.7 & 1.4 $\pm$ 1.1 & 10.7 $\pm$ 1.4 & 12.5 $\pm$ 2.1 & 11.0 $\pm$ 1.1 & \textbf{-0.7 $\pm$ 0.7} \\
   & MLP & -0.8 $\pm$ 2.0 & 3.3 $\pm$ 2.4 & 9.9 $\pm$ 3.0 & 18.8 $\pm$ 3.3 & 55.1 $\pm$ 3.2 & \textbf{-23.5 $\pm$ 1.5} \\
  \cmidrule(l){2-8}
  % ---------------- Wind Power ------------------------------------
  \multirow{2}{*}{\makecell[l]{Wind Power \\ \citep{haslett1989_wind}}}
   & XGB & -1.9 $\pm$ 0.6 & -0.2 $\pm$ 0.3 & -0.4 $\pm$ 0.5 & 4.8 $\pm$ 0.9 & 7.1 $\pm$ 0.9 & \textbf{-2.6 $\pm$ 0.3} \\
   & MLP & 4.9 $\pm$ 3.4 & 14.7 $\pm$ 2.0 & -7.1 $\pm$ 1.4 & 5.5 $\pm$ 1.9 & 13.7 $\pm$ 1.9 & \textbf{-8.0 $\pm$ 1.1} \\
  \cmidrule(l){2-8}
  % ---------------- Synthetic Regression ------------------------------------
  \multirow{2}{*}{\makecell[l]{Synthetic Regression \\ \citep{pmlb_623_fri_c4_1000_10}}}
   & XGB & 141.5 $\pm$ 31.8 & 18.3 $\pm$ 3.6 & 25.0 $\pm$ 8.2 & 116.5 $\pm$ 20.2 & 164.1 $\pm$ 25.8 & \textbf{2.3 $\pm$ 1.8} \\
   & MLP & 78.1 $\pm$ 19.2 & 16.7 $\pm$ 6.4 & -19.2 $\pm$ 2.7 & 71.6 $\pm$ 15.4 & 198.9 $\pm$ 27.2 & \textbf{-33.3 $\pm$ 3.8} \\
  \cmidrule(l){2-8}
  % ---------------- Concrete Strength ------------------------------------
  \multirow{2}{*}{\makecell[l]{Concrete Strength \\ \citep{uci_concrete_yeh1998}}}
   & XGB & \textbf{-2.8 $\pm$ 1.7} & -0.1 $\pm$ 1.4 & -1.1 $\pm$ 3.0 & 7.9 $\pm$ 2.7 & 26.1 $\pm$ 4.3 & -1.7 $\pm$ 1.9 \\
   & MLP & -4.8 $\pm$ 2.9 & -3.8 $\pm$ 1.5 & -5.9 $\pm$ 2.9 & 36.8 $\pm$ 8.2 & 142.3 $\pm$ 17.6 & \textbf{-15.4 $\pm$ 2.3} \\
  \cmidrule(l){2-8}
  % ---------------- Energy Efficiency ------------------------------------
  \multirow{2}{*}{\makecell[l]{Energy Efficiency \\ \citep{tsanas2012_energy}}}
   & XGB & -18.0 $\pm$ 9.1 & -20.7 $\pm$ 3.5 & 3.4 $\pm$ 7.7 & -19.4 $\pm$ 8.1 & \textbf{-24.7 $\pm$ 6.2} & -10.7 $\pm$ 3.8 \\
   & MLP & 11.9 $\pm$ 14.2 & -22.9 $\pm$ 4.1 & 11.1 $\pm$ 11.1 & 127.7 $\pm$ 33.6 & 360.2 $\pm$ 51.5 & \textbf{-32.5 $\pm$ 5.4} \\
  \cmidrule(l){2-8}
  % ---------------- House Price ------------------------------------
  \multirow{2}{*}{\makecell[l]{House Price \\ \citep{kaggle2024_houseprice}}}
   & XGB & -12.8 $\pm$ 12.6 & \textbf{-42.9 $\pm$ 7.4} & -13.4 $\pm$ 12.9 & 298.6 $\pm$ 94.8 & 824.1 $\pm$ 139.6 & -12.8 $\pm$ 3.6 \\
   & MLP & -51.0 $\pm$ 5.1 & \textbf{-52.6 $\pm$ 5.8} & -4.3 $\pm$ 16.1 & 992.5 $\pm$ 263.7 & 3032.9 $\pm$ 374.3 & -42.3 $\pm$ 3.4 \\
  \cmidrule(l){2-8}
  % ---------------- Parkinson's Monitoring ------------------------------------
  \multirow{2}{*}{\makecell[l]{Parkinson's Monitoring \\ \citep{tsanas2009_parkinsonstele}}}
   & XGB & 105.1 $\pm$ 13.2 & 89.5 $\pm$ 11.1 & 286.7 $\pm$ 25.9 & 405.1 $\pm$ 53.2 & 565.3 $\pm$ 49.3 & \textbf{-0.3 $\pm$ 1.9} \\
   & MLP & 116.1 $\pm$ 37.5 & 28.7 $\pm$ 15.1 & 184.9 $\pm$ 36.3 & 659.0 $\pm$ 109.0 & 1454.2 $\pm$ 221.8 & \textbf{-48.2 $\pm$ 6.0} \\
  \cmidrule(l){2-8}
  % ---------------- Wine Quality ------------------------------------
  \multirow{2}{*}{\makecell[l]{Wine Quality \\ \citep{cortez2009_winequality}}}
   & XGB & -2.0 $\pm$ 0.5 & -0.0 $\pm$ 0.6 & -2.6 $\pm$ 0.7 & -0.5 $\pm$ 0.6 & 0.3 $\pm$ 0.8 & \textbf{-2.8 $\pm$ 0.4} \\
   & MLP & 13.1 $\pm$ 5.2 & 23.6 $\pm$ 2.5 & \textbf{-5.5 $\pm$ 0.6} & -0.8 $\pm$ 0.9 & 2.1 $\pm$ 0.6 & -2.9 $\pm$ 0.8 \\
  
      \bottomrule
    \end{tabular}
  }
  \end{table*}

Finally, we benchmarked CRDA against a comprehensive suite of baselines, including  regression augmentation methods (C-Mixup \citep{yao2022cmixup}, ADA \citep{schneider2023anchor}) and deep generative models (TabDDPM \citep{kotelnikov2023tabddpm}, TVAE \citep{xu2019ctgan}, CTGAN \citep{xu2019ctgan}). As shown in Table~\ref{tab:data_generators}, CRDA demonstrates superior stability and performance. For a fair comparison, CRDA's safety gate is disabled so that augmentation is applied unconditionally, exactly as the competing methods are; its advantage therefore comes from the augmentation mechanism itself.
While geometric methods like ADA and C-Mixup provide gains in specific settings, they exhibit catastrophic failure modes in others (e.g., increasing MSE by over 100\% on \textit{Synthetic Regression} and \textit{Parkinson's}). Similarly, deep generative models significantly degrade performance more often, likely due to difficulties in capturing the precise conditional distribution $P(Y|X)$ required for regression.
In contrast, CRDA's residual-preserving mechanism ensures that synthetic samples remain faithful to the underlying noise structure. Across all datasets, CRDA is the only method that reliably improves performance for both XGBoost and MLP models without the risk of significant degradation.

\section{Limitations}
\label{sec:limitations}

CRDA is currently designed for regression tasks; extending its principles to classification, where residuals are not straightforwardly defined, is a direction for future work.

The core of our method's validity hinges on a key assumption: the model's residual noise, $Z$, is conditionally independent of the features we choose to perturb, $X_P$, given the features we hold fixed, $X_R$ (Assumption \ref{assump:cond}). In practice, verifying this assumption from finite data is a primary challenge. A poorly fitted base predictor can produce residuals that retain dependencies on all input features, causing interventions on $X$ to break the required noise invariance. To address this, CRDA employs a two-layer screen. 

First, we use the PC algorithm and a Pearson correlation test as a \textit{risk-control heuristic} to filter for candidate features that are likely to satisfy Assumption~\ref{assump:cond}. We acknowledge this screen is imperfect; the PC algorithm can fail in the presence of unobserved confounders and scale poorly with high feature counts, while correlation tests may not detect non-linear dependencies.
However, these filters are not intended to be infallible but rather a practical first line of defense. The theoretical guarantees of the PC algorithm, for instance, are well-studied; under standard assumptions, its error probability of incorrectly identifying an edge decays exponentially with sample size \citep{kalisch2007estimating}. This sample consistency suggests that the risk of our filter admitting a feature that violates Assumption~\ref{assump:cond} diminishes as more data becomes available. 

Moreover, CRDA incorporates a second, decisive \textit{safety gate}: the Wilcoxon signed-rank test. This evaluates the realized impact of augmentation on a validation set. If the generated samples do not yield a statistically significant improvement, we return the original baseline. This mechanism ensures that we either improve the baseline or abstain from augmentation, thereby mitigating the risk of performance degradation from an imperfect initial screen.

To address concerns that a strong base learner is required, we include \emph{linear regression} experiments in Appendix \ref{app:linreg}. In every dataset/fold, the Wilcoxon gate produced $p\!>\!0.05$, so CRDA \emph{abstained}. If one \emph{ignores} the gate and forces augmentation, performance generally degrades or does not improve, illustrating that the safety checks are beneficial and block harmful augmentation for weak baselines.

Finally, the performance of CRDA is sensitive to both dataset size and the choice of the base predictor. For very large datasets, the need for augmentation diminishes, and CRDA offers little benefit. Conversely, if a dataset is too small, the base model may be too weak to produce meaningful residuals that are even approximately independent of the features, and our statistical tests will lack power. This can be seen in our sample-size-scaling experiment in Figure \ref{fig:sample-size-synthetic}.

\section{Conclusion}
\label{sec:conclusion}

We described a new data augmentation technique for regression called CRDA. CRDA is model-agnostic and it does not assume any domain knowledge such as specific transformations that preserve labels. Instead, it leverages counterfactual reasoning and the invariance of the residual noise distribution. We demonstrated the effectiveness of CRDA in data scarce regression tasks where it helped improve predictions made by representative base predictors including XGBoost and multi-layer perceptrons. We also displayed substantially stronger and more reliable results when compared to state-of-the-art tabular data generators.

Several directions for future work remain. The first is to extend CRDA to classification tasks. The key challenge is due to the non-numeric nature of residuals, though embedding-based transformations offer a potential path. The second would be to explore alternative methods in our feature partitioning step, such as formal proxy-based causal inference techniques or confounder-robust causal discovery algorithms like Fast Causal Inference (FCI)~\citep{spirtes2000causation}. This may enable CRDA to better adjust for hidden factors rather than simply discarding confounded features.

To support adoption on real tabular datasets, which often contain categorical columns outside the scope of the multiplicative perturbation analyzed here, we additionally release a reference implementation as a Python package. It is available via \texttt{pip install crda} (\url{https://pypi.org/project/crda/}) and extends the paper's method by supporting uniform category resampling for one-hot-encoded categorical features. We also replace our feature independence tests with distance-correlation \citep{szekely2007measuring} for continuous features and mutual information for categorical features. This variant still shares all other logic including the same residual-reuse construction and Wilcoxon outcome gate.

\section*{Impact Statement}
This is foundational work that aims to improve regression in data scarce scenarios, which can span a wide range of important domains.  As discussed in Section~\ref{sec:limitations} of the paper, CRDA could worsen predictive accuracy instead of improving it, leading to negative consequences in high impact applications.  To mitigate such outcomes, CRDA has filters in place that detect and prevent harm.

\section*{Acknowledgment}
We acknowledge funding from the Canada CIFAR AI Chair program, discovery grants to Poupart, Li and Schulte
    from the Natural Sciences and Engineering Research Council of Canada and a grant
    from IITP \& MSIT of Korea (No.~RS-2024-00457882, AI Research Hub Project).
    Computational resources used in preparing this research were provided, in part,
    by the Province of Ontario, the Government of Canada through CIFAR, and
    companies sponsoring the Vector Institute
\url{https://vectorinstitute.ai/about/current-partners/}.

\newpage
\clearpage

\bibliography{example_paper}
\bibliographystyle{icml2026}

%%%%%%%%%%%%%%%%%%%%%%%%%%%%%%%%%%%%%%%%%%%%%%%%%%%%%%%%%%%%%%%%%%%%%%%%%%%%%%%
%%%%%%%%%%%%%%%%%%%%%%%%%%%%%%%%%%%%%%%%%%%%%%%%%%%%%%%%%%%%%%%%%%%%%%%%%%%%%%%
% APPENDIX
%%%%%%%%%%%%%%%%%%%%%%%%%%%%%%%%%%%%%%%%%%%%%%%%%%%%%%%%%%%%%%%%%%%%%%%%%%%%%%%
%%%%%%%%%%%%%%%%%%%%%%%%%%%%%%%%%%%%%%%%%%%%%%%%%%%%%%%%%%%%%%%%%%%%%%%%%%%%%%%
\newpage
\appendix
\onecolumn

\section{Methodological Diagram}\label{app:pipeline_diagram}

% --- DIAGRAM CODE ---
\begin{figure}[htbp]
    \centering
    \begin{tikzpicture}[
        node distance=1.0cm and 1.0cm, % Vertical and Horizontal spacing
        auto,
        % --- STYLES ---
        block/.style={
            rectangle, 
            draw=black!60, 
            thick, 
            fill=white, 
            text width=3.5cm, 
            align=center, 
            rounded corners, 
            minimum height=1.2cm,
            drop shadow
        },
        data/.style={
            trapezium, 
            trapezium left angle=70, 
            trapezium right angle=110, 
            draw=black!60, 
            thick, 
            fill=yellow!10, 
            align=center, 
            drop shadow,
            minimum width=2.5cm
        },
        decision/.style={
            diamond, 
            draw=orange!60, 
            thick, 
            fill=orange!10, 
            text width=2.5cm, 
            align=center, 
            aspect=2, 
            drop shadow
        },
        line/.style={
            draw, 
            -Latex, 
            thick, 
            gray!80,
            rounded corners=5pt
        },
        % Highlights
        systematic/.style={draw=blue!70, fill=blue!5},
        residual/.style={draw=red!70, fill=red!5}
    ]

    % --- NODES ---

    % 1. Top: Input
    \node [data] (input) {Training Data \\ $\{(x_i, y_i)\}$};

    % 2. Baseline (Below Input)
    \node [block, below=0.6cm of input, systematic] (baseline) {\textbf{Base Predictor} \\ Train $\hat{g}(\cdot)$};

    % 3. Branching: Residuals (Left) and Filter (Right)
    \node [block, below left=1.2cm and 0.2cm of baseline, residual] (residuals) {\textbf{Residual \\ Extraction} \\ $\hat{z}_i = y_i - \hat{g}(x_i)$};
    
    \node [decision, below right=2.3cm and 0.2cm of baseline] (filter) {\textbf{Independence \\ Filter} \\ (PC Alg + Corr)};

    % 4. Action: Perturb (Below Filter)
    \node [block, below=0.8cm of filter] (perturb) {\textbf{Perturb Features} \\ $x'_P = x_P(1+\delta)$ \\ Keep $x_R$ Fixed};

    % 5. Synthesis: Repredict (Centered below Residuals and Perturb)
    % We use coordinate calculation to center it perfectly
    \node [block, below=2.5cm of baseline] (repredict) at ($(residuals.south)!0.5!(perturb.south) + (0,-1.5)$) {\textbf{Counterfactual \\ Labeling} \\ $y'_i = \hat{g}(x') + \hat{z}_i$};

    % 6. Bottom: Output
    \node [data, below=0.6cm of repredict, fill=green!10] (output) {Augmented Data \\ $\mathcal{D}_{\text{aug}}$};

    % --- CONNECTIONS ---

    % Input -> Baseline
    \path [line] (input) -- (baseline);

    % Baseline -> Branches
    % We create a split point below baseline
    \coordinate [below=0.6cm of baseline] (split);
    \path [line] (baseline) -- (split);
    
    % Split -> Residuals
    \path [line] (split) -| node[pos=0.3, above] {Predictions} (residuals);
    
    % Split -> Filter
    \path [line] (split) -| node[pos=0.3, above] {Features $X$} (filter);

    % Filter -> Perturb
    \path [line] (filter) -- node [right] {Select $X_P$} (perturb);

    % Perturb -> Repredict
    \path [line] (perturb.south) |- ++(0,-0.5) -| node [pos=0.1, below] {New Input $x'$} (repredict.north east);

    % Residuals -> Repredict
    \path [line] (residuals.south) |- ++(0,-0.5) -| node [pos=0.1, below] {Invariant $\hat{z}_i$} (repredict.north west);

    % Repredict -> Output
    \path [line] (repredict) -- (output);

    % Optional Background for Generation Phase
    \begin{scope}[on background layer]
        \node [draw=gray!20, dashed, rounded corners, fit=(filter) (perturb), label=right:\textcolor{gray}{\textit{Generation}}] {};
    \end{scope}

    \end{tikzpicture}
    \caption{The Counterfactual Residual Data Augmentation (CRDA) Pipeline. The workflow proceeds top-to-bottom; the base model component $\hat{g}(\cdot)$ first isolates the residual noise $\hat{z}_i$. Simultaneously, the Independence Filter identifies safe features $x_P$. These are perturbed and recombined with the preserved residual to generate valid counterfactual samples.}
    \label{fig:crda_pipeline}
\end{figure}

\section{Implementation Details}
\label{app:implementation}

All experiments were conducted in \texttt{Python}, leveraging standard libraries for machine learning and hyperparameter optimization. We used \texttt{scikit-learn} for the \texttt{MLPRegressor} baseline \citep{pedregosa2011scikit}, \texttt{XGBoost} for the \texttt{XGBoostRegressor} baseline \citep{chen2016xgboost}, and \texttt{Optuna} for tuning CRDA's specific hyperparameters \citep{akiba2019optuna}.

Our experimental protocol uses 10-fold cross-validation (CV), and we seed all random number generators to ensure reproducibility. Computations were performed on a single \textbf{AWS c7i.24xlarge} instance equipped with 96 vCPUs and 192 GB of RAM. To facilitate the complete reproduction of our findings, we have made the full study logs, JSON configuration files, and source code available in our project repository, \url{https://github.com/mhmohebbi/CRDA/}.

%-------------------------------------------------------------
\section{Dataset Summary}\label{app:data}
%-------------------------------------------------------------
Table~\ref{tab:dataset_stats} lists the nine tabular-regression benchmarks used in the paper, with their total sample count, dimensionality (\# numeric columns), and provenance
repository.
\begin{table}[h]
  \centering
  \caption{Basic statistics of the evaluation datasets, including number of samples ($n_{\text{samp}}$) and number of features ($n_{\text{features}}$).}
  \label{tab:dataset_stats}
  \begin{tabular}{lrrl}
    \toprule
    Dataset & $n_{\text{samp}}$ & $n_{\text{features}}$ & Source \\
    \midrule
    CPU Performance	               &  8192 & 12 & PMLB \citep{pmlb_227_cpu_small} \\
    Satellite Image	               &  6435 & 36 & PMLB \citep{pmlb_294_satellite_image} \\
    Wind Power                    &  6574 & 14 & UCI \citep{haslett1989_wind}  \\
    Synthetic Regression           &  1000 & 10 & PMLB \citep{pmlb_623_fri_c4_1000_10} \\
    Concrete Strength  &  1005 & 8 & UCI \citep{uci_concrete_yeh1998}  \\
    Energy Efficiency              &  768 & 9 & UCI \citep{tsanas2012_energy}  \\
    House Price                    &  1000 & 7 & Kaggle \citep{kaggle2024_houseprice} \\
    Parkinson's Monitoring	       &  5875 & 20 & UCI \citep{tsanas2009_parkinsonstele}  \\
    Wine Quality                   &  5318 & 11 & UCI \citep{cortez2009_winequality} \\
    \bottomrule
  \end{tabular}
\end{table}

%-------------------------------------------------------------
\section{Complete Experimental Protocol}\label{app:protocol}
%-------------------------------------------------------------
\paragraph{Configuration Objects.}
We centralize all experiment settings (e.g.\ dataset path, model type, global seed, CRDA knobs) in a Python class \texttt{Config}. Each run instantiates a \texttt{Config} with specific arguments and passes it to our \texttt{Experiment} harness, which saves the resulting configuration to a JSON file for reproducibility.

\begin{lstlisting}[language=json,caption={Example truncated config file for an XGB run.},label={lst:config}]
{
  "baseline": "xgboost",
  "dataset_path": "../data/WineQuality.csv",
  "sample_sizes": [1063, 2126, 3189, 4252, 5315],
  "ignore_filter": true,
  "hyperparam_tune": true,
  "results_dir": "../experiments/WineQuality",
  "...": "More fields omitted (test_size, num_seeds, p_wilcoxon_threshold, etc.)"
}
\end{lstlisting}

\paragraph{Key Fields and Usage}
\begin{itemize}
    \item \textbf{Model parameters:} \texttt{baseline} can be set to ``mlp'' or ``xgboost''; we do not alter other hyper-parameters (those are tuned via \texttt{RandomizedSearchCV}).
    \item \textbf{CRDA knobs:} \texttt{aug\_data\_size\_factor}, 
    \texttt{max\_n\_features\_to\_perturb} and
    \texttt{max\_perturb\_percent}. These are also tunable parameters (via \texttt{Optuna}). They specify how many counterfactual samples to generate, how many perturbable features we perturb and by how much; see Section~\ref{app:crda_knobs}.
    \item \textbf{Data splits:} \texttt{sample\_sizes} enumerates partial subsets of a dataset (e.g.\ $\frac{n}{5},\dots,n$), and \texttt{test\_size} sets the final train–test ratio.
    \item \textbf{Miscellaneous toggles:} \texttt{hyperparam\_tune} (whether to run a cross-validated search), \texttt{ignore\_filter} (bypass CRDA’s feature independence checks), \texttt{save\_plots}, etc.
\end{itemize}
For each experiment, the \texttt{Experiment} class reads the \texttt{config} object, runs the pipeline (training, augmentation, evaluation), and dumps logs plus final results in a timestamped directory. By reloading \texttt{config.json} via \texttt{Config.from\_dict}, one can exactly reproduce the same run.

%-------------------------------------------------------------
\section{Hyper-parameters and Search Spaces}\label{app:hparams}
%-------------------------------------------------------------
\subsection{Base Regressor Configurations}\label{app:predictor_hparams}

The two baseline families--\textbf{MLPRegressor} and \textbf{XGBoostRegressor}--share a
hybrid strategy: we \emph{fix} well-established architectural or optimisation knobs to
textbook defaults, while \emph{searching} over the handful of hyper-parameters that most
strongly drive bias–variance trade-offs.  This mirrors common practice in tabular ML
benchmarks \citep{delgado2014dozen} and keeps the search budget ($20$ trials per 3-fold, per dataset, per baseline) focused on the levers that matter.

\paragraph{Why these choices?}
For MLPs we retain the ReLU–Adam recipe that has been shown to be robust for small/medium
tabular tasks \citep{goodfellow2016dl}. We enable \emph{adaptive} learning-rate and early
stopping to guard against over-training, and explore only depth/width
(`hidden\_layer\_sizes`) and three learning-dynamics scalars
($\alpha$,~$\text{learning\_rate\_init}$,~$\text{tol}$).
For XGBoost we follow the histogram grow policy (``\texttt{tree\_method=hist}'') that is
memory-friendly on CPUs, fix $1\,000$ boosting rounds (with early stopping inside the CRDA
loop), and search the usual five knobs that govern tree shape, sampling and shrinkage.
Median wall-clock per trial on a \texttt{c7i.24xlarge} is
$\sim$9s (MLP) and $\sim$4s (XGB).\footnote{Full timing logs are available in
\texttt{experiments/full\_reproduction.ipynb} of the code repository.}
Tables \ref{tab:hparams_mlp} and \ref{tab:hparams_xgb} enumerate the
\textit{search priors} together with the \emph{modal} best value across
datasets.

\begin{table}[h]
  \centering
  \caption{MLPRegressor hyper-parameters searched with
           \textsc{RandomizedSearchCV}.  Ranges use log-uniform
           ($\mathrm{LogU}$) or categorical priors.}
  \label{tab:hparams_mlp}
  \begin{tabular}{lll}
    \toprule
      \textbf{Parameter} & \textbf{Prior / Range} & \textbf{Modal best} \\
    \midrule
      hidden\_layer\_sizes & $\{(128,64,32),(128,64),(64,32),(64,)\}$ & (128,64,32) \\
      $\alpha$ (L2)        & $\mathrm{LogU}(10^{-5},10^{-3})$         & 0.00040 \\
      learning\_rate\_init & $\mathrm{LogU}(10^{-3},10^{-2})$         & 0.00942 \\
      tol                  & $\mathrm{LogU}(10^{-5},10^{-4})$         & 0.00009 \\
    \midrule
      \multicolumn{3}{l}{\emph{Fixed for all runs}}\\
      activation           & \multicolumn{2}{l}{relu} \\
      solver               & \multicolumn{2}{l}{adam} \\
      batch\_size          & \multicolumn{2}{l}{32} \\
      max\_iter            & \multicolumn{2}{l}{1\,000} \\
      learning\_rate       & \multicolumn{2}{l}{adaptive} \\
      early\_stopping      & \multicolumn{2}{l}{true} \\
      validation\_fraction & \multicolumn{2}{l}{0.10} \\
      n\_iter\_no\_change   & \multicolumn{2}{l}{20} \\
    \bottomrule
  \end{tabular}
\end{table}

\begin{table}[h]
  \centering
  \caption{XGBoostRegressor hyper-parameters searched with
           \textsc{RandomizedSearchCV}.  Log-spaces are
           base-10.}
  \label{tab:hparams_xgb}
  \begin{tabular}{lll}
    \toprule
      \textbf{Parameter} & \textbf{Prior / Range} & \textbf{Modal best} \\
    \midrule
      learning\_rate      & $\log_{10}\!\,[10^{-3},10^{-1}]$ (10 pts) & 0.02154 \\
      max\_depth          & $\{3,4,6\}$                               & 6 \\
      min\_child\_weight  & $\{1,5\}$                                 & 5 \\
      subsample           & $\{0.7,1.0\}$                             & 0.7 \\
      colsample\_bytree   & $\{0.7,1.0\}$                             & 0.7 \\
      reg\_lambda         & $\log_{10}\!\,[10^{-3},10^{1}]$ (6 pts)   & 0.03981 \\
    \midrule
      \multicolumn{3}{l}{\emph{Fixed for all runs}}\\
      objective           & \multicolumn{2}{l}{reg:squarederror} \\
      tree\_method        & \multicolumn{2}{l}{hist} \\
      n\_estimators       & \multicolumn{2}{l}{1\,000} \\
      reg\_alpha          & \multicolumn{2}{l}{0.0} \\
      early\_stopping\_rounds & \multicolumn{2}{l}{20} \\
    \bottomrule
  \end{tabular}
\end{table}

\subsection{Data Augmentation Baseline Configurations}\label{app:baseline_hparams}

For comparability and reproducibility of the Table~\ref{tab:data_generators} benchmark, we report the full configuration used for each augmentation baseline. In all cases we followed the recommended defaults from the method's original paper and its public reference implementation; no method-specific tuning was performed on our datasets. Table~\ref{tab:baseline_hparams} summarizes the values.

\begin{table}[h]
  \centering
  \caption{Hyperparameter configurations for the augmentation baselines used in Table~\ref{tab:data_generators}. Values follow the recommended defaults of each method's reference implementation.}
  \label{tab:baseline_hparams}
  \begin{tabular}{lll}
    \toprule
      \textbf{Method} & \textbf{Hyperparameter} & \textbf{Value} \\
    \midrule
      \multirow{4}{*}{C-Mixup}
        & alpha          & 1.0 \\
        & KDE bandwidth  & 1.0 \\
        & mixtype        & \texttt{kde} \\
        & KDE type       & \texttt{gaussian} \\
    \midrule
      \multirow{2}{*}{ADA}
        & gammas             & $\{0.5,\,0.75,\,1.5,\,2.0\}$ \\
        & number of anchors  & 10 \\
    \midrule
      \multirow{7}{*}{TabDDPM}
        & learning rate         & $10^{-3}$ \\
        & batch size            & 4096 \\
        & diffusion timesteps   & 1000 \\
        & training iterations   & 1000 \\
        & MLP layers            & 2 \\
        & MLP layer width       & 256 \\
        & dropout               & 0.0 \\
    \midrule
      \multirow{6}{*}{CTGAN}
        & batch size                          & 500 \\
        & embedding dimension                 & 128 \\
        & generator / discriminator dims      & $(256,\,256)$ \\
        & learning rate (G / D)               & $2\!\times\!10^{-4}$ \\
        & weight decay                        & $10^{-6}$ \\
        & pac                                 & 10 \\
    \midrule
      \multirow{5}{*}{TVAE}
        & embedding dimension  & 128 \\
        & encoder dims         & $(128,\,128)$ \\
        & decoder dims         & $(128,\,128)$ \\
        & L2 scale             & $10^{-5}$ \\
        & loss factor          & 2 \\
    \bottomrule
  \end{tabular}
\end{table}

\newpage

%-------------------------------------------------------------
\section{CRDA Knob Selection \& Sensitivity}\label{app:crda_knobs}
%-------------------------------------------------------------
\vspace{-0.4em}
CRDA exposes three \emph{augmentation knobs}.  
During a 30-trial \textsc{Optuna}–TPE search
(per dataset, per baseline) we sample from the priors in
Table~\ref{tab:crda_knobs}; all other implementation details
are inherited from Algorithm~1 (main paper).

\begin{itemize}
\item \textbf{\texttt{max\_n\_features\_to\_perturb}} \;
      controls \emph{how many} invariant features are jointly
      edited, trading off sample realism against diversity.
\item \textbf{\texttt{aug\_data\_size\_factor}} \;
      decides the \# of counterfactuals per real point; values
      $<\!1$ mitigate class-imbalance–style bias, whereas
      $>\!1$ favors variance reduction.
\item \textbf{\texttt{max\_perturb\_percent}} \;
      sets the half-width of the \([-p,+p]\) uniform scaling
      band; larger $p$ injects broader \emph{counterfactual
      sweep} but risks violating local linearity assumptions
      of the residual.%
\end{itemize}

\smallskip
These three parameters explain the vast majority of
between-trial variance in validation MSE, so limiting
\textsc{Optuna} to a small budget remains effective.
Median trial time is $\sim$7.5s (MLP) and $\sim$3.7s (XGB).

Table~\ref{tab:crda_knobs} reports the \emph{modal} best-value
across the nine benchmarks and their partitions.

\begin{table}[h]
  \centering
  \caption{CRDA augmentation knobs: search priors and
           modal best values.}
  \label{tab:crda_knobs}
  \begin{tabular}{lll}
    \toprule
      \textbf{Knob} & \textbf{Search prior / range} & \textbf{Modal best} \\
    \midrule
      max\_n\_features\_to\_perturb & $\{1,2,3,4,5\}$         
      & 2 \\
      aug\_data\_size\_factor       & $\{0.50,0.75,1.00,1.25,1.50\}$ & 1.25 \\
      max\_perturb\_percent         & $\{0.10,0.20,\dots,1.00\}$      & 0.7 \\
    \bottomrule
  \end{tabular}
\end{table}

% .............................................................
\subsection*{One-dataset sweep (House Price)}
% .............................................................
For illustration, we fix two of the three knobs at their modal best values (from Table~\ref{tab:crda_knobs}) and systematically vary the remaining knob. Figures~\ref{fig:knob-sens-mlp} and \ref{fig:knob-sens-xgb} show the resulting percentage change in MSE (\textit{lower} is better) on the \textit{House Price} dataset, averaged over five random seeds. We make several observations:

\begin{itemize}
    \item \textbf{Augmenting data size} (\texttt{aug\_data\_size\_factor}) appears more beneficial for MLP, presumably because additional training samples reduce overfitting; by contrast, XGB sees weaker or even mixed effects here, consistent with the notion that tree ensembles can already leverage smaller sets effectively.
    \item \textbf{Number of perturbed features} (\texttt{max\_n\_features\_to\_perturb}) shows an opposite preference: XGB yields stronger gains when more features are jointly modified, whereas MLP performance degrades if we perturb too many simultaneously (likely hurting the local consistency of the residual).
    \item \textbf{Perturbation magnitude} (\texttt{max\_perturb\_percent}) also diverges across baselines: larger scales help XGB discover more diverse synthetic points, but MLP tends to prefer smaller shifts in order to maintain stable gradients in training.
\end{itemize}

In short, although \emph{both} models benefit from CRDA overall, their ideal hyper-parameter configurations differ. This shows the importance of model-aware tuning for effective data augmentation.

\newpage

\begin{figure}[!h]
  \centering
  
  \begin{subfigure}[t]{0.62\linewidth}
    \centering
    \includegraphics[width=\linewidth]{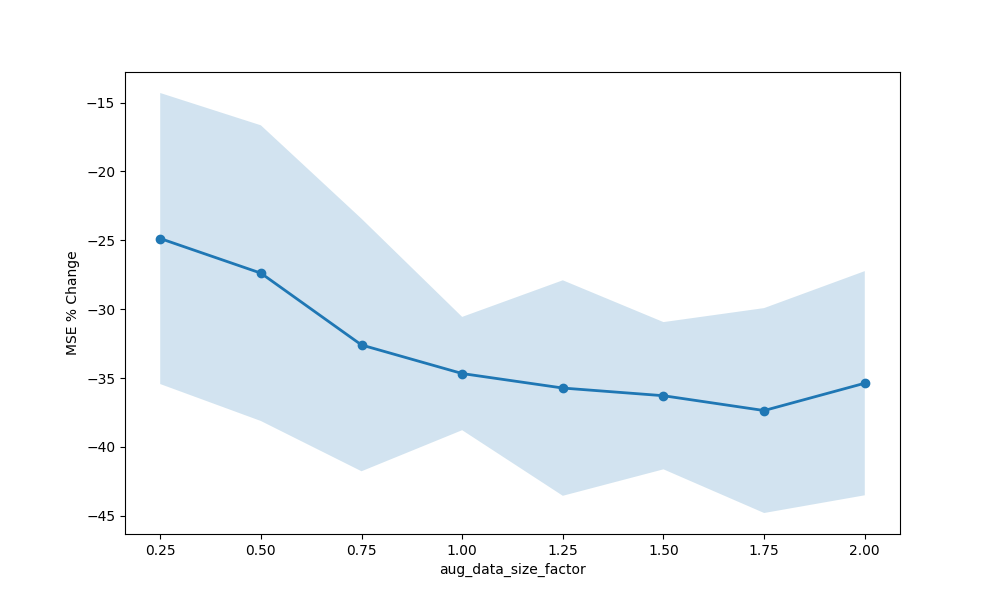}
    % \caption{\texttt{aug\_data\_size\_factor}}
  \end{subfigure}\hfill
  
  \begin{subfigure}[t]{0.62\linewidth}
    \centering
    \includegraphics[width=\linewidth]{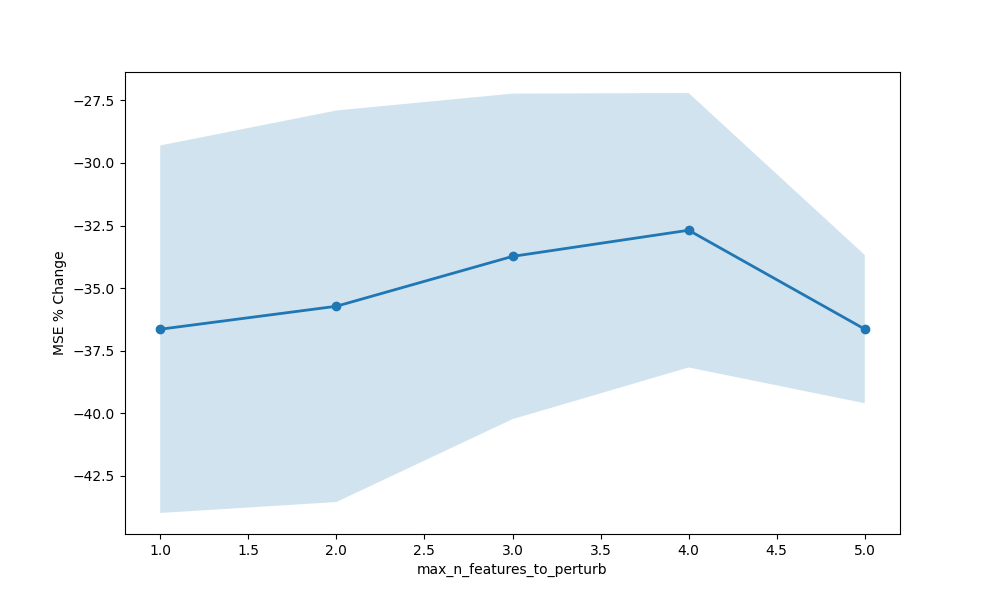}
    % \caption{\texttt{max\_n\_features\_to\_perturb}}
  \end{subfigure}\hfill

  \begin{subfigure}[t]{0.62\linewidth}
    \centering
    \includegraphics[width=\linewidth]{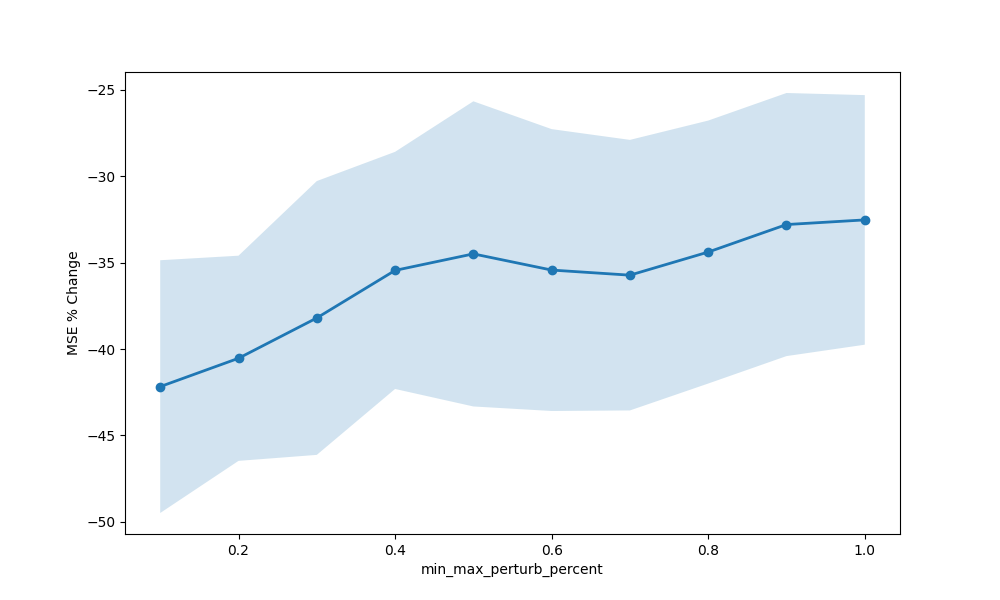}
    % \caption{\texttt{max\_perturb\_percent}}
  \end{subfigure}

  \caption{CRDA knob–sensitivity on the \textbf{MLP} baseline
           (House\-Price dataset).}
  \label{fig:knob-sens-mlp}
\end{figure}

\newpage

\begin{figure}[!h]
  \centering
  
  \begin{subfigure}[t]{0.62\linewidth}
    \centering
    \includegraphics[width=\linewidth]{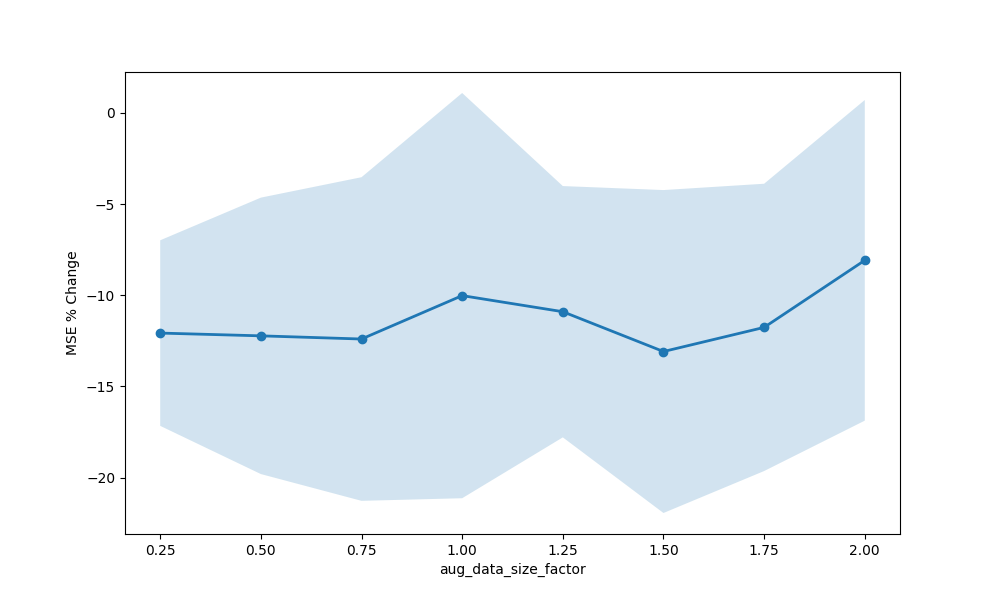}
    % \caption{\texttt{aug\_data\_size\_factor}}
  \end{subfigure}\hfill

  \begin{subfigure}[t]{0.62\linewidth}
    \centering
    \includegraphics[width=\linewidth]{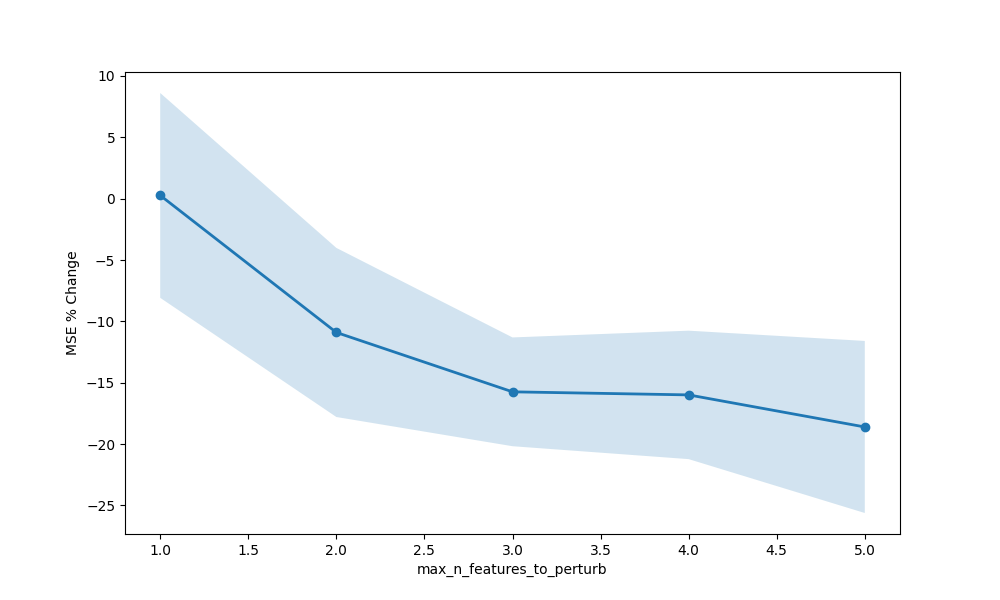}
    % \caption{\texttt{max\_n\_features\_to\_perturb}}
  \end{subfigure}\hfill

  \begin{subfigure}[t]{0.62\linewidth}
    \centering
    \includegraphics[width=\linewidth]{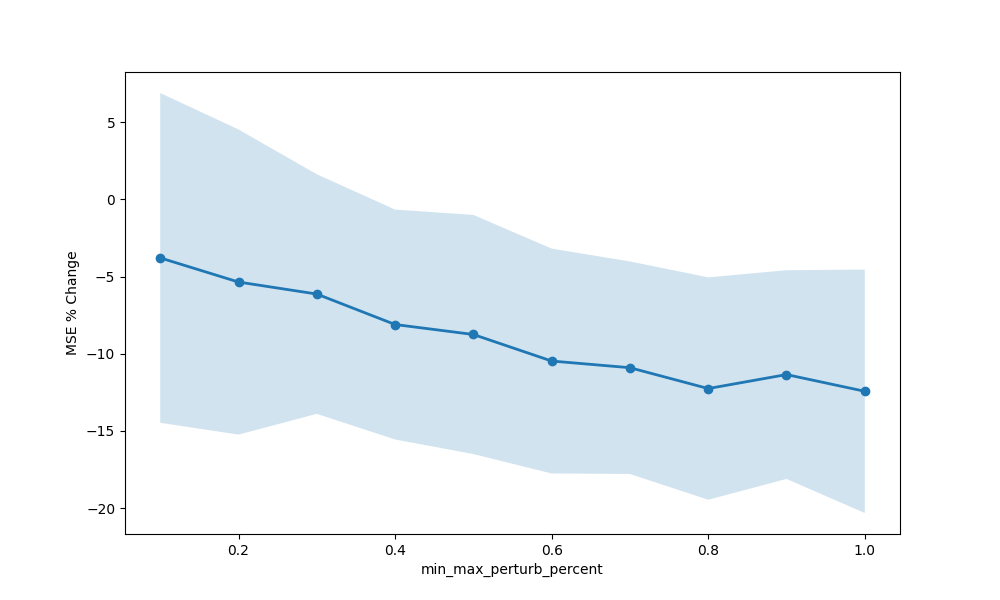}
    % \caption{\texttt{max\_perturb\_percent}}
  \end{subfigure}

  \caption{CRDA knob–sensitivity on the \textbf{XGB} baseline
           (House\-Price dataset).}
  \label{fig:knob-sens-xgb}
\end{figure}

\newpage

%-------------------------------------------------------------
\section{Causal Implications of Assumption~\ref{assump:cond}} \label{app:causal}
%-------------------------------------------------------------

This section gives causal analogues of our statistical independent assumptions and derives consequences for the causal relationship between the perturbed features $X_P$ and the target variable $Y$. 

\paragraph{Background: d-separation.}
We briefly recall the graphical notion underlying our assumptions \citep{pearl2009causality, peters2017elements}. A \emph{path} between two nodes is any sequence of distinct adjacent edges, regardless of their orientation. A node $W$ on a path is a \emph{collider} if both adjacent edges point into it ($\rightarrow W \leftarrow$). Given a conditioning set $S$, a path is \emph{blocked} if either (i) it contains a non-collider that is in $S$, or (ii) it contains a collider that is neither in $S$ nor has a descendant in $S$. Two nodes are \emph{d-separated given $S$} if every path between them is blocked, and \emph{d-connected given $S$} otherwise. The central property we use is that d-separation in the graph implies conditional independence in any distribution that factorizes according to it: if $A$ and $B$ are d-separated given $S$, then $A \perp B \mid S$ \citep{pearl2009causality}. The causal (d-separation) analogue of Assumption~\ref{assump:cond} is therefore that $Z$ is d-separated from $X_P$ given $X_R$.

We make the following assumption about the true data-generating causal graph $G$: (1) $Z$ is the only latent parent of $Y$, representing the common interpretation of $Z$ as summarizing unobserved causes influencing $Y$ and (2) $Z$ is d-separated from $X_P$ given $X_R$, which implies Condition~\ref{assump:cond}. In the following, write $\mathit{PA}(Y)$ for the parents of target variable $Y$.

\begin{lemma} \label{lemma:separation}
Let $G$ be any causal graph whose variables comprise $X_P,X_R,Z,Y$, and possibly other latent variables $U$, such that $Z$ is the only latent parent of $Y$ and $Z$ is d-separated from $X_P$ given $X_R$. Then every perturbed nonparent $X \in (X_P - \mathit{PA}(Y))$ is d-separated from $Y$ given $(\mathit{PA}(Y) \cap X_P) \cup X_R$.
\end{lemma}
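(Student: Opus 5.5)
The plan is to argue by contradiction and convert any d-connecting path between $X$ and $Y$ into one between some node of $X_P$ and $Z$ that is d-connecting given $X_R$ alone. That would contradict the hypothesis that $Z$ is d-separated from $X_P$ given $X_R$. Throughout, write $S = (\mathit{PA}(Y)\cap X_P)\cup X_R$. I take the observed variables to be exactly $X_P\cup X_R\cup\{Y\}$, so the first hypothesis (that $Z$ is the only latent parent of $Y$) gives $\mathit{PA}(Y)\subseteq X_P\cup X_R\cup\{Z\}$. Hence every parent of $Y$ other than $Z$ lies in $S$.

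\emph{Step 1 (reduce to a path ending at $Z$).} Fix $X\in X_P-\mathit{PA}(Y)$ and suppose $\pi$ is a path from $X$ to $Y$ that is d-connecting given $S$. I split on the edge of $\pi$ incident to $Y$.
\begin{itemize}
\item If it is $W\to Y$ with $W\neq Z$, then $W\in S$ and $W$ is a non-collider on $\pi$, so $\pi$ is blocked. This case is impossible.
\item If it is $Z\to Y$, then the subpath of $\pi$ from $X$ to $Z$ is d-connecting given $S$.
\item If it is $Y\to W$, append $Z\to Y$ to the $Y$-end of $\pi$. On the resulting path, $Y$ is a non-collider and $Y\notin S$, so it is d-connecting given $S$ between $X$ and $Z$. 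It is a genuine path, since $Z$ cannot already lie on $\pi$ without the subpath to $Z$ serving directly.
\end{itemize}
In every surviving case we obtain a path between a node of $X_P$ and $Z$ that is d-connecting given $S$.

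\emph{Step 2 (remove the extra conditioning on $X_P\cap\mathit{PA}(Y)$).} Among all pairs $(X',\rho)$ with $X'\in X_P$ and $\rho$ a path from $X'$ to $Z$ that is d-connecting given $S$, choose one minimizing the number of colliders, then the length. I will show that $\rho$ is d-connecting given $X_R$, which contradicts the hypothesis.
\begin{itemize}
\item \emph{Non-colliders.} If some non-collider on $\rho$ lies in $X_P$, the suffix of $\rho$ from that node to $Z$ is shorter, has no more colliders, and is still d-connecting given $S$. This contradicts minimality. So no interior node of $\rho$ is in $X_P$, and non-colliders avoid $S\supseteq X_R$.
\item \emph{Colliders.} Suppose a collider $C$ on $\rho$ has a descendant in $S$ but none in $X_R$. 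Then it has a descendant $W\in X_P\cap\mathit{PA}(Y)$; take a shortest directed path $C\to\cdots\to W$. Splice it with the segment of $\rho$ from $C$ to $Z$ to get a walk from $W$ to $Z$. On this walk $C$ is a non-collider, and the directed segment introduces no colliders. Nodes on the directed segment are ancestors of $W$ and, by shortestness, not in $S$ except $W$ itself. So the walk has strictly fewer colliders and is d-connecting given $S$.
\end{itemize}

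\emph{Main obstacle.} The delicate step is the walk-to-path conversion in the collider case. The spliced object may revisit nodes, so it must be shortened to a simple path without creating new colliders that lack active descendants. I would handle this with the standard lemma that a d-connecting walk given a set contains a d-connecting path between the same endpoints given that set. If needed, I would reprove it by cutting at the first repeated node and checking that the only possible new collider is an ancestor of a collider already on the walk, hence still has a descendant in the conditioning set. Granting that lemma, minimality forces every collider of $\rho$ to have a descendant in $X_R$, so $\rho$ is d-connecting given $X_R$. This contradicts the d-separation of $Z$ and $X_P$ given $X_R$, and therefore $X$ is d-separated from $Y$ given $S$.
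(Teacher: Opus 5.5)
Your proof is correct and has the same skeleton as the paper's. Both argue by contradiction and split on the edge of the path at $Y$: an observed parent lies in $S$ and blocks; $Z\to Y$ leaves an $X_P$--$Z$ path; an edge out of $Y$ is extended by $Z\to Y$. Both then pass to the last perturbed node, which your suffix step reproduces.

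The difference is your Step 2, and it is an improvement. The paper's step (*) claims that the subpath from the last perturbed node $X_k$ d-connects given $X_R$ alone, because its interior avoids $X_P$. That disposes of non-colliders, but not of a collider whose only descendant in $S$ lies in $\mathit{PA}(Y)\cap X_P$. Take edges $X_0\to U\leftarrow Z\to Y$ and $U\to W\to Y$, with $X_P=\{X_0,W\}$ and $X_R=\emptyset$. The path $X_0\to U\leftarrow Z\to Y$ d-connects given $S=\{W\}$, yet its subpath $X_0\to U\leftarrow Z$ is blocked given $\emptyset$. The contradiction must come from rerouting to $W\leftarrow U\leftarrow Z$, which is exactly your splice-and-minimize argument. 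So you supply a step the paper leaves unjustified.

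Two details need tightening.

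First, choose the directed path from $C$ to be shortest into the \emph{set} $X_P\cap\mathit{PA}(Y)$. Its intermediate nodes avoid $X_R$ because $C$ has no descendant in $X_R$, not because of shortestness. $C$ itself lies outside $S$: it is not in $X_R$, and by the suffix step it is not in $X_P$. The suffix step applies to colliders too, since the suffix from a collider has strictly fewer colliders.

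Second, and more important, your minimality contradiction needs the path extracted from the spliced walk to have strictly fewer colliders than $\rho$. The walk-to-path lemma as usually stated does not promise this. It does hold for loop excision. Suppose excising a loop at a repeated node $V$ makes $V$ a new collider while both occurrences were non-colliders. Then the loop leaves $V$ on an out-edge and returns on an out-edge of $V$, so it contains a collider, which is removed. Hence the collider count never increases. Alternatively, minimize over walks throughout and convert to a path only once, at the end, given $X_R$.
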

\begin{proof} 
For contradiction, consider a path $X_0-X_1-\ldots-X_n-Y$ that d-connects $X_0 \in (X_P - \mathit{PA}(Y))$ with $Y$ given $(\mathit{PA}(Y) \cap X_P) \cup X_R$. We use  - to indicate a generic edge directed either way. 

Let $X_k \in X_P$ be the last perturbed variable on the path. That is, $k \geq 0$ is the largest index such that $X_k \in X_P$. Since subpaths of a d-connecting path are also d-connecting paths, the subpath $X_k-\ldots-X_n- Y$ d-connects $X_k$ to $Y$ given $(\mathit{PA}(Y) \cap X_P) \cup X_R$. Since all variables between $X_k$ and $X_n$ are in $X_R$, we have that (*) the subpath d-connects $X_k$ to $X_n$ given $X_R$ (only). 

Case 1: The subpath is of the form $X_k-\ldots-X_n \rightarrow Y$. 
%Then $n > k$ since $X_k$ is not a parent of $Y$. Also 
Then $X_n$ is not an observed variable, since otherwise conditioning on $(\mathit{PA}(Y) \cap X_P) \cup X_R$ would block the path. Since $Z$ is the only latent parent of $Y$, this means that $X_n = Z$. Therefore by the property (*), the subpath d-connects $X_k$ to $Z$ given $X_R$, which contradicts Assumption (2). 

Case 2: The subpath is of the form $X_k-\ldots-X_n \leftarrow Y$. Therefore $Y$ is not a collider on the path $X_k-\ldots-X_n \leftarrow Y \leftarrow Z$, and this path d-connects $Z$ to $X_k \in X_P$ given $X_R$, which again contradicts Assumption (2). 

Since both possible cases result in a contradiction, there is no path that d-connects a non-parent $X_k$ to the target variable $Y$ given the perturbed parents and $X_R$, which was to be proven.
\end{proof}

\begin{proposition} \label{prop:main-causal}
   Let $G$ be any causal graph whose variables comprise $X_P,X_R,Z,Y$, and possibly other latent variables $U$, such that $Z$ is the only latent parent of $Y$ and $Z$ is d-separated from $X_P$ given $X_R$.  Let $G'$ be the truncated graph in which the variables in $X_P$ have no parents. Then $P_{G}(Y|X) = P_{G'}(Y|X)$.  Therefore the CRDA procedure of Section~\ref{sec:method} computes the counterfactual $P(Y_{X=x'}|X=x,Y=y)$. 
\end{proposition}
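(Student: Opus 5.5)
Our plan is to reduce the claim to a comparison of the conditional distribution of $Y$ given its observed parents in the two graphs, and then feed this into the abduction--action--prediction template of Section~\ref{sec:background_causal}.

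\emph{Step 1: localize to the mechanism of $Y$.} Passing from $G$ to $G'$ removes only edges into $X_P$. It therefore leaves the structural equation $Y \leftarrow f_Y(\mathit{PA}(Y), Z)$ untouched, and also the mechanisms of $Z$ and of every other variable. Write $\mathit{PA}(Y) = (\mathit{PA}(Y)\cap X_P) \cup (\mathit{PA}(Y)\cap X_R) \cup \{Z\}$; this uses the hypothesis that $Z$ is the only latent parent. By Lemma~\ref{lemma:separation}, every $X \in X_P - \mathit{PA}(Y)$ is d-separated from $Y$ given $(\mathit{PA}(Y)\cap X_P)\cup X_R$. The global Markov property then gives
\[
P_G(Y \mid X) = P_G\bigl(Y \mid X_{P}\cap \mathit{PA}(Y),\, X_R\bigr)
= \int P\bigl(Y \mid \mathit{PA}(Y)\bigr)\, dP_G\bigl(Z \mid X_P\cap\mathit{PA}(Y), X_R\bigr),
\]
and the second factor reduces to $P_G(Z \mid X_R)$ by assumption (2). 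The same lemma applies in $G'$, since deleting edges into $X_P$ cannot create new d-connecting paths. So in $G'$ we obtain the identical expression, with $P_{G'}(Z\mid X_R)$ in place of $P_G(Z\mid X_R)$.

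\emph{Step 2: show $P_{G}(Z\mid X_R)=P_{G'}(Z\mid X_R)$.} This is the main obstacle, because truncating the parents of $X_P$ can change the joint law of $X_R$ whenever $X_R$ contains descendants of $X_P$. Our first attempt is a factorization argument. The joint over $(Z, X_R)$ with $X_P$ marginalized is unchanged by the intervention on $X_P$ as long as conditioning on $X_R$ blocks all influence of $X_P$ on $Z$, which is exactly the d-separation $Z \perp X_P \mid X_R$. Concretely, we would invoke Rule~2/Rule~3 of the do-calculus. The relevant statement is $P(Z \mid X_R, \operatorname{do}(X_P)) = P(Z \mid X_R, X_P) = P(Z\mid X_R)$, because $Z$ and $X_P$ are d-separated given $X_R$ in the graph with edges out of $X_P$ removed (a subgraph of $G$). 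If $X_R$ contains descendants of $X_P$, this may require restricting to the conditional rather than the marginal law of $X_R$. That is all we need, since only $P(Y\mid X)$ is claimed.

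\emph{Step 3: counterfactual.} Given $P_G(Y\mid X)=P_{G'}(Y\mid X)$, the observational regression $g(X)=\mathbb{E}[Y\mid X]$ coincides with the interventional one, $\mathbb{E}[Y\mid \operatorname{do}(X_P), X_R]$. Under the additive form $Y=g(X)+Z$, abduction recovers $z=y-g(x)$. The action replaces $x_P$ by $x'_P$ and leaves $Z$ and $X_R$ fixed, because $Z$ is not a descendant of $X_P$ given the separation. Prediction then yields $g(x'_P,x_R)+z$, which is exactly CRDA's label at the population level.
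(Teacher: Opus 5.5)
Your proof is correct in substance but takes a different route from the paper. The paper proves $P_G(Y\mid X)=P_{G'}(Y\mid X)$ with a second path-by-path case analysis that parallels Lemma~\ref{lemma:separation}. It takes a putative d-connecting back-door path from a perturbed parent $X_0\in X_P\cap\mathit{PA}(Y)$ to $Y$ and looks at the last perturbed node on it. It then shows that entering $Y$ through $Z$, through an observed parent, or through a child of $Y$ each contradicts (2). The absence of such paths is read as equality of the conditionals in $G$ and $G'$.

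You instead factor $P(Y\mid X)$ through two pieces: the mechanism $P(Y\mid\mathit{PA}(Y))$, which truncation leaves untouched, and the posterior of the single latent parent. Everything then reduces to $P_G(Z\mid X_R)=P_{G'}(Z\mid X_R)$. That is one application of Rule 2, and its premise, $Z\perp X_P\mid X_R$ in $G_{\underline{X_P}}$, is immediate from (2) because $G_{\underline{X_P}}$ is a subgraph of $G$.

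Each version has its advantages. Yours isolates exactly where (2) is used and needs no second case analysis. It also handles descendants of $X_P$ inside $X_R$ rigorously, which the paper's step ``truncation only removes back-door paths'' leaves implicit. The paper's version is pure d-separation reasoning with no do-calculus black box, and its Case 2 deals directly with children of $Y$, which your factorization has to exclude by hand.

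In Step 2, drop the sentence claiming the joint law of $(Z,X_R)$ is unchanged. It is false when $X_R$ contains descendants of $X_P$, and the conditional Rule 2 statement is all you use.

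Two further points need fixing.

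First, in Step 1, replacing $P_G(Y\mid Z,X_P\cap\mathit{PA}(Y),X_R)$ by $P(Y\mid\mathit{PA}(Y))$ requires that $X_R$ contain no descendant of $Y$ (unless $Y$ is deterministic given $\mathit{PA}(Y)$). State why this holds. If $X_P\cap\mathit{PA}(Y)\neq\emptyset$, a descendant of $Y$ in $X_R$ opens the collider on $X_0\to Y\leftarrow Z$ and so violates (2). The degenerate case $X_P\cap\mathit{PA}(Y)=\emptyset$ needs a separate argument, for instance Rule 2 applied to $Y$ itself.

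Second, in Step 3 the claim that $Z$ is not a descendant of $X_P$ is false in general. Assumption (2) permits $X_P\to X_R\to Z$, and the paper explicitly allows $Z$ to depend on $X_R$. What (2) does give is that every directed path from $X_P$ to $Z$ passes through $X_R$, since a path avoiding $X_R$ would d-connect them given $X_R$. So $Z$ keeps its abducted value under the joint intervention $\operatorname{do}(X=(x'_P,x_R))$, which is what $Y_{X=x'}$ denotes. It need not keep that value under $\operatorname{do}(X_P=x'_P)$ alone, because then descendants of $X_P$ in $X_R$ would move. With the action stated this way, your Step 3 matches the paper's concluding argument.
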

\begin{proof}
Let $X_0 \in (X_P \cap \mathit{PA(Y)})$ be a perturbed parent of $Y$. The difference between the original graph $G$ and the truncated graph $G'$ is that the truncated graph removes backdoor paths that d-connect $X_0$ to $Y$ given $(\mathit{PA}(Y) \cap X_P) \cup X_R$. We show that there is no such backdoor path in the original graph. 

For contradiction, consider any backdoor path $X_0 \leftarrow X_1-\ldots-X_n-Y$ that d-connects $X_0 \in (X_P \cap \mathit{PA}(Y))$ with $Y$ given $(\mathit{PA}(Y) \cap X_P) \cup X_R$. As in the proof of Lemma~\ref{lemma:separation}, let $X_k \in X_P$ be the last perturbed variable on the path. As in the previous proof, we have that (*) the subpath d-connects $X_k$ to $X_n$ given $X_R$ (only). 

Case 1: The subpath is of the form $X_k-\ldots-X_n \rightarrow Y$. Since $Z$ is the only latent parent, there are only two possible cases.

Case 1a: $X_n = Z$. Then the subpath d-connects $X_k$ to $Z$ given $X_R$, which contradicts Assumption (2).
\\
Case 1b: $X_n$ is an observed variable. Then conditioning on $(\mathit{PA}(Y) \cap X_P) \cup X_R$ blocks a path ending in $X_n \rightarrow Y$ unless $X_n = X_0$. But then the path is of the form $X_0 \rightarrow Y$ and is not a backdoor path. 

Case 2: The subpath is of the form $X_k-\ldots-X_n \leftarrow Y$. Then as in Case 2 of Lemma~\ref{lemma:separation}, the path $X_k-\ldots-X_n \leftarrow Y \leftarrow Z$ d-connects $X_k$ to $Z$ given $X_R$, which contradicts Assumption 2.

Since there are no backdoor paths from perturbed parents in the non-truncated graph $G$, the conditional probabilities for perturbed parents are the same on both graphs:

\begin{align} \label{eq:cond-parents}
    P_G(Y|(\mathit{PA}(Y) \cap X_P) \cup X_R) = P_{G'}(Y|(\mathit{PA}(Y) \cap X_P) \cup X_R)
\end{align}

Since truncating only increases d-separation which implies independence, we conclude the following from Equation~\eqref{eq:cond-parents} and  Lemma~\ref{lemma:separation}:
\begin{align}
    P_G(Y|X_P,X_R) = P_G(Y|(\mathit{PA}(Y) \cap X_P) \cup X_R) = P_{G'}(Y|(\mathit{PA}(Y) \cap X_P) \cup X_R) = P_{G'}(Y|X_P,X_R)
  \label{eq:cond-final}
\end{align}
Equation~\eqref{eq:cond-final} shows that the conditional probabilities coincide in the original and truncated graphs, for every assignment of values to $X$, including $X = x'$.
Therefore the regressor $g$ fit on observational data estimates the correct interventional response. Under the additive-noise model, observing $(X=x, Y=y)$ further identifies the realized noise $Z = y - g(x)$, and reusing it while setting $X=x'$ yields $g(x') + z$; this is precisely CRDA's augmented label and equals the counterfactual outcome $Y_{X=x'}$, establishing that CRDA computes $P(Y_{X=x'} \mid X=x, Y=y)$.
\end{proof}

\paragraph{Remarks}
We note that Lemma \ref{lemma:separation} and Proposition \ref{prop:main-causal} above condition on $(\mathit{PA}(Y) \cap X_P) \cup X_R$ rather than on $X_R$ alone. The additional set $\mathit{PA}(Y) \cap X_P$ comprises the perturbed features that are themselves direct causes of $Y$; conditioning on them is necessary so that the remaining perturbed non-causes are blocked from $Y$, isolating the causal contribution of the perturbation. This is consistent with Assumption~\ref{assump:cond}, which constrains the residual $Z$ (the latent parent) and places no requirement on the observed parents of $Y$.

Our argument does not assume knowledge of the true causal graph $G$. The argument says that if we did know the true causal graph $G$, and used it to compute the counterfactual probability $P(Y_{X=x'}|X=x,Y=y)$, we would obtain the same result as estimating conditional probabilities $P(Y|X=x')$ after updating our posterior over the latent variable $Z$ given the observations $x,y$. 

The assumption that $Z$ is the only latent parent of $Y$ can be relaxed by allowing other latent parents but strengthening Assumption (2) to require that $X_P$ is d-separated from all latent parents of $Y$ given $X_R$. 

Proposition~\ref{prop:main-causal} is purely structural and does not depend on the particular form of the additive noise model. For example, it also applies to location-scale noise models~\citep{sun2023cause}.

%-------------------------------------------------------------
\section{Validation of Assumptions and Component Analysis}\label{app:validation_ablation}
%-------------------------------------------------------------

In this section, we provide a deeper analysis of the CRDA framework. First, we empirically validate the core residual independence assumption using Mutual Information. Second, we perform ablation studies to demonstrate the benefits of applying CRDA compared to simplified baselines.

%-------------------------------------------------------------
\subsection{Empirical Validation of Residual Independence}\label{app:divergence}
%-------------------------------------------------------------

A core theoretical assumption of CRDA (Assumption~\ref{assump:cond}) is that the residual noise $Z$ is conditionally independent of the features selected for perturbation ($X_P$), i.e., $P(Z | X_P, X_R) = P(Z | X_R)$. To validate this assumption and assess the effectiveness of our PC-algorithm/Correlation filter, we conducted an analysis measuring the Mutual Information (MI) between the residuals and the features. Mutual Information is an empirical estimator of the KL-Divergence $D_{KL}(P(Z, X) || P(Z)P(X))$; a value of zero indicates perfect independence.

We performed this evaluation across all 9 benchmark datasets using the XGBoost regressor over 15 random seeds. For each run, we calculated the MI (using the Kraskov KSG estimator \citep{kraskov2004estimating}) for the set of features \textit{Selected} ($X_P$) by CRDA versus the set of features \textit{Rejected} ($X_R$).

The results are presented in Table \ref{tab:mi_divergence_analysis}. We observe that for datasets showing stronger feature-residual dependence (e.g., Energy Efficiency, House Price), the features rejected by our filter display significantly higher Mutual Information with the residuals (up to $\approx3\times$ higher) than the selected features. This confirms that the filter effectively identifies and removes features that would violate the independence assumption. For other datasets (e.g., Wine Quality, Wind Power), the MI scores for both selected and rejected features are uniformly low, indicating that the residuals are naturally independent of the features in these domains, and the filter correctly permits a wider range of perturbations.

\begin{table}[h]
\centering
\caption{Evaluation of Feature-Residual Independence via Mutual Information (MI). We report the MI (in nats) between the model residuals $Z$ and the features $X$, comparing features \textbf{Selected} by CRDA vs. those \textbf{Rejected}. Results are averaged over 15 seeds with standard errors. The \textbf{Ratio} column highlights the effectiveness of the filter in reducing divergence (higher is better).}
\label{tab:mi_divergence_analysis}
% \resizebox{\textwidth}{!}{%
\begin{tabular}{l c c c}
\toprule
\textbf{Dataset} & \textbf{Selected Features ($X_P$)} & \textbf{Rejected Features ($X_R$)} & \textbf{Divergence Ratio} \\
& (Lower is better) & (Higher implies dependence) & ($MI_{Rej} / MI_{Sel}$) \\
\midrule
\textbf{House Price} & $\mathbf{0.0056 \pm 0.0020}$ & $0.0155 \pm 0.0023$ & $\mathbf{2.75\times}$ \\
\textbf{Energy Efficiency} & $\mathbf{0.0054 \pm 0.0012}$ & $0.0160 \pm 0.0023$ & $\mathbf{2.94\times}$ \\
\textbf{Parkinson's Monitoring} & $\mathbf{0.0054 \pm 0.0006}$ & $0.0103 \pm 0.0007$ & $\mathbf{1.92\times}$ \\
\textbf{Synthetic Regression} & $\mathbf{0.0073 \pm 0.0013}$ & $0.0136 \pm 0.0025$ & $\mathbf{1.85\times}$ \\
Concrete Strength & $0.0203 \pm 0.0024$ & $0.0320 \pm 0.0035$ & $1.58\times$ \\
CPU Performance & $0.0136 \pm 0.0010$ & $0.0144 \pm 0.0014$ & $1.06\times$ \\
Wine Quality & $0.0110 \pm 0.0009$ & $0.0136 \pm 0.0011$ & $1.23\times$ \\
Wind Power & $0.0065 \pm 0.0007$ & $0.0084 \pm 0.0007$ & $1.29\times$ \\
Satellite Image & $0.1031 \pm 0.0013$ & $0.1149 \pm 0.0009$ & $1.11\times$ \\
\bottomrule
\end{tabular}
% }
\end{table}

%-------------------------------------------------------------
\subsection{Ablation Studies}\label{app:simple_ablations}
%-------------------------------------------------------------

To verify that the independence assumption verified above translates to performance gains, we compare CRDA against two simplified ablation baselines:
\begin{itemize}
    \item \textbf{Global Perturbation:} All features are perturbed randomly ($X_P = X$), ignoring the PC-algorithm and correlation checks.
    \item \textbf{Label Invariance:} Features are perturbed, but the label is kept fixed ($y' = y$), rather than recalculating $y' = \hat{g}(x') + \hat{z}$.
\end{itemize}

Table~\ref{tab:ablation_results} presents the percentage change in MSE ($\Delta\%$) relative to the unaugmented base regressor across 3 representative datasets. CRDA consistently yields the largest error reduction. Notably, simple baselines often yield negligible improvements or even degrade performance (positive $\Delta\%$).

\begin{table}[h]
  \centering
  \caption{Ablation results on Synthetic Regression, Energy Efficiency, and Parkinson's Monitoring datasets. Values represent the percentage change in MSE ($\Delta\%$) relative to the unaugmented baseline (lower is better). Results are averaged over 5 seeds with standard errors.}
  \label{tab:ablation_results}
  \renewcommand{\arraystretch}{0.95}
  % \setlength{\tabcolsep}{5pt}
  % \resizebox{\textwidth}{!}{%
  \begin{tabular}{llccc}
    \toprule
    \multirow{2}{*}{\textbf{Dataset}} & \multirow{2}{*}{\textbf{Model}} & \multicolumn{3}{c}{\textbf{MSE $\Delta\%$ Change} ($\downarrow$)} \\
    \cmidrule(lr){3-5}
    & & Global Perturbation & Label Invariance & CRDA \\
    \midrule
    \multirow{2}{*}{\makecell[l]{Synthetic Regression}} 
      & MLP & $-16.12 \pm 4.30$ & $-12.44 \pm 32.43$ & $\mathbf{-38.94 \pm 4.02}$ \\
      & XGB & $+1.21 \pm 2.10$ & $-1.02 \pm 1.33$ & $\mathbf{-3.62 \pm 1.93}$ \\
    \midrule
    \multirow{2}{*}{\makecell[l]{Energy Efficiency}} 
      & MLP & $-14.50 \pm 3.86$ & $-2.65 \pm 2.47$ & $\mathbf{-38.84 \pm 5.99}$ \\
      & XGB & $-7.15 \pm 9.75$ & $-5.28 \pm 8.78$ & $\mathbf{-17.45 \pm 5.16}$ \\
    \midrule
    \multirow{2}{*}{\makecell[l]{Parkinson's Monitoring}} 
      & MLP & $-13.50 \pm 8.90$ & $+0.55 \pm 19.04$ & $\mathbf{-58.40 \pm 5.16}$ \\
      & XGB & $+0.36 \pm 1.57$ & $-3.09 \pm 4.35$ & $\mathbf{-7.82 \pm 2.47}$ \\
    \bottomrule
  \end{tabular}
% }
\end{table}

\newpage

%-------------------------------------------------------------
\section{Statistical Significance Tests}\label{app:wilcoxon}
%-------------------------------------------------------------
For every dataset $\times$ training–set–fraction of our main experiment we did a 10-fold cross validation comparison of \textsc{CRDA's} augmented MSE against the corresponding raw unaugmented MSE with a two--sided
Wilcoxon signed--rank test
($n_{\text{folds}}=10$, $n_{\text{seeds}}=15$ per cell).
The heat-maps in Figures~\ref{fig:heatmap-mlp} and ~\ref{fig:heatmap-xgb} visualize the outcome.

\begin{figure}[h]
  \centering
  \includegraphics[width=.8\linewidth]{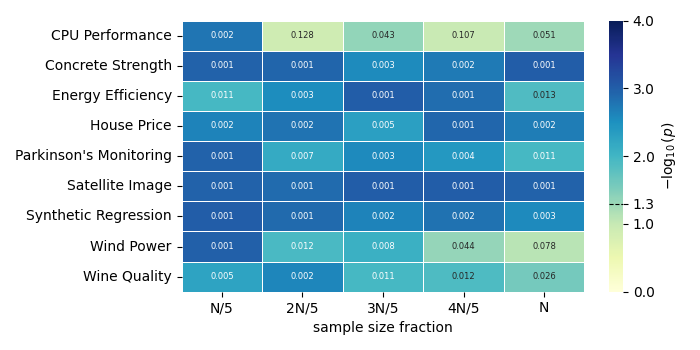}
  \caption{\textbf{MLP baseline.}  
           Colour encodes $-\!\log_{10}(p)$; numbers are the
           mean $p$ across 15 seeds.
           The dashed line on the colour-bar marks the
           $\alpha = 0.05$ threshold ($-\!\log_{10}p \approx 1.3$).}
  \label{fig:heatmap-mlp}
\end{figure}

\begin{figure}[!h]
  \centering
  \includegraphics[width=.8\linewidth]{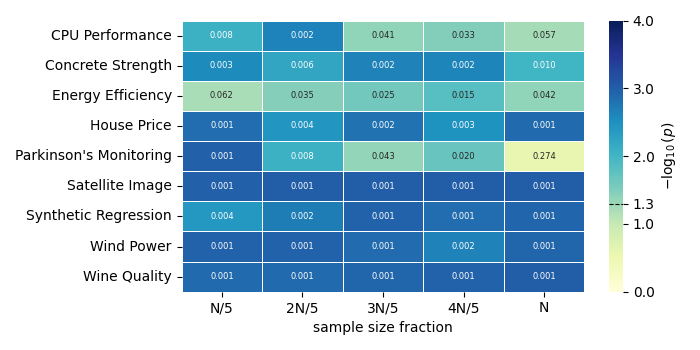}
  \caption{\textbf{XGB baseline.}  
           Same layout and colour scale as Figure ~\ref{fig:heatmap-mlp}.}
  \label{fig:heatmap-xgb}
\end{figure}

\textbf{Brief analysis.}
Across \emph{both} baselines the majority of cells are darker than the
$\alpha=0.05$ cut-off, indicating that
\textsc{CRDA} delivers a statistically significant reduction in test--MSE
for most dataset/size combinations.  
Significance is strongest for smaller training sets and occasionally
weakens as the full dataset is used
(e.g.\ \texttt{CPU Performance} and \texttt{Wind Power} for MLP,
\texttt{Parkinson's Monitoring} for XGB),
but even at $n$ the method remains significant in 7/9 datasets with at
least one baseline.  
These results support the robustness of the performance gains reported
in the main paper.

\newpage

%-------------------------------------------------------------
\section{Additional Baseline: CatBoost Analysis}
\label{app:catboost}
%-------------------------------------------------------------

To assess CRDA's robustness against stronger tree-based ensembles, we performed an additional evaluation using CatBoost \citep{prokhorenkova2018catboost}. CatBoost is often considered better than XGBoost due to its oblivious trees and robustness to overfitting, making it a challenging predictor to improve upon.

We evaluated performance at three fixed sample sizes ($N=\{300, 500, 700\}$) to observe behavior across different data availabilities.

Table~\ref{tab:catboost_results} presents the percentage change in MSE ($\Delta\%$). We observe three distinct behaviors:
\begin{itemize}
    \item \textbf{Consistent Gains:} On \textit{House Price} and \textit{Wind Power}, CRDA significantly reduces MSE across all sample sizes (peaking at -22.8\% for \textit{House Price}), demonstrating that CRDA behaves robustly for these tasks regardless of sample size.
    
    \item \textbf{Late-Stage Gains:} \textit{CPU Performance} requires a sufficient number of samples to model the residual. It shows no benefit at $N=300$ but improves substantially as data increases, reaching -13.0\% at $N=700$.
    \item \textbf{Sweet-Spot Behavior:} Datasets such as \textit{Parkinson's Monitoring}, \textit{Energy Efficiency}, and \textit{Synthetic Regression} exhibit a ``sweet spot" around $N=500$, where the augmentation provides the most benefit ($\approx$ 4-5\% reduction) before CatBoost potentially saturates the signal at larger sample sizes.
\end{itemize}

\begin{table}[h]
  \centering
  \caption{Average percentage change MSE ($\Delta\%$) for \textit{CatBoost} at fixed sample sizes across 15 seeds $\pm$ standard error.}
  \label{tab:catboost_results}
  \setlength{\tabcolsep}{5pt}
  \begin{tabular}{lccc}
    \toprule
    \textbf{Dataset} & \textbf{$N=300$} & \textbf{$N=500$} & \textbf{$N=700$} \\
    \midrule
    House Price & -22.80 $\pm$ 1.97 & -18.87 $\pm$ 1.69 & -14.11 $\pm$ 1.64 \\
    CPU Performance & 1.92 $\pm$ 2.99 & -7.34 $\pm$ 1.91 & -13.04 $\pm$ 2.96 \\
    Parkinson's Monitoring & -1.44 $\pm$ 1.99 & -5.13 $\pm$ 1.97 & -1.33 $\pm$ 1.49 \\
    Energy Efficiency & -1.65 $\pm$ 2.83 & -4.95 $\pm$ 3.32 & -1.89 $\pm$ 2.96 \\
    Synthetic Regression & -2.31 $\pm$ 1.86 & -4.01 $\pm$ 1.97 & -1.15 $\pm$ 2.16 \\
    Wind Power & -3.78 $\pm$ 1.23 & -2.54 $\pm$ 0.79 & -2.47 $\pm$ 0.95 \\
    Satellite Image & -0.95 $\pm$ 1.74 & -2.08 $\pm$ 1.34 & -1.95 $\pm$ 0.97 \\
    Wine Quality & -0.09 $\pm$ 1.28 & -1.62 $\pm$ 0.96 & -1.94 $\pm$ 0.68 \\
    Concrete Strength & -0.95 $\pm$ 1.85 & -0.69 $\pm$ 1.29 & 0.46 $\pm$ 1.26 \\
    \bottomrule
  \end{tabular}
\end{table}

\newpage

%-------------------------------------------------------------
\section{Linear Regression Base Predictor Study}\label{app:linreg}
%-------------------------------------------------------------
In order to test our method against weaker base predictors; where separate systematic signal cannot be cleanly separated from noise, possibly violating our assumptions; we selected linear regression. Using the same 15 seeds and data settings as the main experiment, we conducted this study to observe how CRDA behaves.

Table~\ref{tab:linreg_results} reports the averages and standard errors for baseline MSE, CRDA MSE, their percentage change ($\Delta$\,\%) as well as the $p-values$ from the Wilcoxon signed‑rank test for every dataset and sample size subset.

We see that CRDA’s filters \textit{rejected} every single fold. Recall that for the Wilcoxon signed‑rank test, if the p-value is above the 0.05 threshold, CRDA stops. We still report the $\Delta$\,\% if we had ignored the filter and observe how CRDA hurts here. CRDA therefore protects against weaker baselines, further illustrating how model-agnostic does not imply \textit{always helpful}.

\begin{table}[h]
  \scriptsize
  \centering
  \setlength\tabcolsep{3.5pt}
  \caption{Augmentation results for Linear Regression.  
           Cells are green when data augmentation
           was selected to proceed according to the Wilcoxon signed rank test and red otherwise. Lower is better for the $\Delta$ MSE \% change $\downarrow$.}
  \label{tab:linreg_results}
  \begin{tabular}{l c *{4}{c}}
    \toprule
    \multirow{2}{*}{\textbf{Dataset}}
      & \multirow{2}{*}{\textbf{Size}}
      & \multicolumn{4}{c}{\textbf{Linear Regression}} \\ 
    \cmidrule(lr){3-6}
      & & $\text{MSE}_{\mathrm{baseline}}$ & $\text{MSE}_{\mathrm{CRDA}}$ & $\Delta$\,\% $\downarrow$ & $\text{p-value}$
    \\
    \midrule

% ---------------- CPU Performance ------------------------------------
\multirow{5}{*}{CPU Performance} & 1638 & \bad{0.011094 $\pm$ 0.000901} & \bad{0.011066 $\pm$ 0.000871} & \bad{0.04 $\pm$ 0.60} & \bad{0.461 $\pm$ 0.036} \\
   & 3276 & \bad{0.010935 $\pm$ 0.000596} & \bad{0.011012 $\pm$ 0.000625} & \bad{0.58 $\pm$ 0.42} & \bad{0.506 $\pm$ 0.039} \\
   & 4914 & \bad{0.009789 $\pm$ 0.000305} & \bad{0.009784 $\pm$ 0.000306} & \bad{-0.05 $\pm$ 0.24} & \bad{0.452 $\pm$ 0.039} \\
   & 6552 & \bad{0.009834 $\pm$ 0.000372} & \bad{0.009887 $\pm$ 0.000374} & \bad{0.55 $\pm$ 0.19} & \bad{0.515 $\pm$ 0.030} \\
   & 8190 & \bad{0.009705 $\pm$ 0.000359} & \bad{0.009709 $\pm$ 0.000358} & \bad{0.05 $\pm$ 0.10} & \bad{0.456 $\pm$ 0.042} \\
% ---------------- Satellite Image ------------------------------------
\midrule
\multirow{5}{*}{Satellite Image} & 1287 & \bad{0.042119 $\pm$ 0.000676} & \bad{0.042185 $\pm$ 0.000681} & \bad{0.16 $\pm$ 0.15} & \bad{0.240 $\pm$ 0.038} \\
   & 2574 & \bad{0.041148 $\pm$ 0.000522} & \bad{0.041131 $\pm$ 0.000521} & \bad{-0.04 $\pm$ 0.07} & \bad{0.264 $\pm$ 0.033} \\
   & 3861 & \bad{0.040646 $\pm$ 0.000402} & \bad{0.040666 $\pm$ 0.000407} & \bad{0.05 $\pm$ 0.05} & \bad{0.275 $\pm$ 0.026} \\
   & 5148 & \bad{0.040154 $\pm$ 0.000315} & \bad{0.040183 $\pm$ 0.000306} & \bad{0.08 $\pm$ 0.05} & \bad{0.393 $\pm$ 0.038} \\
   & 6435 & \bad{0.040492 $\pm$ 0.000301} & \bad{0.040509 $\pm$ 0.000299} & \bad{0.04 $\pm$ 0.05} & \bad{0.347 $\pm$ 0.032} \\
% ---------------- Wind Power ------------------------------------
\midrule
\multirow{5}{*}{Wind Power} & 1314 & \bad{0.007335 $\pm$ 0.000271} & \bad{0.007339 $\pm$ 0.000270} & \bad{0.06 $\pm$ 0.11} & \bad{0.501 $\pm$ 0.044} \\
   & 2628 & \bad{0.006363 $\pm$ 0.000134} & \bad{0.006368 $\pm$ 0.000135} & \bad{0.08 $\pm$ 0.06} & \bad{0.468 $\pm$ 0.029} \\
   & 3942 & \bad{0.006580 $\pm$ 0.000102} & \bad{0.006583 $\pm$ 0.000102} & \bad{0.04 $\pm$ 0.04} & \bad{0.493 $\pm$ 0.030} \\
   & 5256 & \bad{0.006583 $\pm$ 0.000087} & \bad{0.006584 $\pm$ 0.000087} & \bad{0.00 $\pm$ 0.03} & \bad{0.498 $\pm$ 0.025} \\
   & 6570 & \bad{0.006175 $\pm$ 0.000052} & \bad{0.006175 $\pm$ 0.000051} & \bad{-0.00 $\pm$ 0.02} & \bad{0.528 $\pm$ 0.034} \\
% ---------------- Synthetic Regression ------------------------------------
\midrule
\multirow{5}{*}{Synthetic Regression} & 200 & \bad{0.023265 $\pm$ 0.001023} & \bad{0.023317 $\pm$ 0.001013} & \bad{0.29 $\pm$ 0.63} & \bad{0.306 $\pm$ 0.030} \\
   & 400 & \bad{0.022073 $\pm$ 0.000571} & \bad{0.022101 $\pm$ 0.000572} & \bad{0.13 $\pm$ 0.27} & \bad{0.365 $\pm$ 0.033} \\
   & 600 & \bad{0.021332 $\pm$ 0.000500} & \bad{0.021358 $\pm$ 0.000507} & \bad{0.12 $\pm$ 0.16} & \bad{0.385 $\pm$ 0.040} \\
   & 800 & \bad{0.015924 $\pm$ 0.000396} & \bad{0.015945 $\pm$ 0.000400} & \bad{0.12 $\pm$ 0.09} & \bad{0.381 $\pm$ 0.039} \\
   & 1000 & \bad{0.015908 $\pm$ 0.000378} & \bad{0.015911 $\pm$ 0.000373} & \bad{0.03 $\pm$ 0.13} & \bad{0.419 $\pm$ 0.032} \\
% ---------------- Concrete Strength ------------------------------------
\midrule
\multirow{5}{*}{Concrete Strength} & 201 & \bad{0.016621 $\pm$ 0.001084} & \bad{0.016592 $\pm$ 0.001080} & \bad{-0.15 $\pm$ 0.29} & \bad{0.362 $\pm$ 0.036} \\
   & 402 & \bad{0.017469 $\pm$ 0.000896} & \bad{0.017431 $\pm$ 0.000917} & \bad{-0.33 $\pm$ 0.32} & \bad{0.352 $\pm$ 0.032} \\
   & 603 & \bad{0.017323 $\pm$ 0.000489} & \bad{0.017336 $\pm$ 0.000507} & \bad{0.03 $\pm$ 0.26} & \bad{0.406 $\pm$ 0.029} \\
   & 804 & \bad{0.016711 $\pm$ 0.000571} & \bad{0.016726 $\pm$ 0.000582} & \bad{0.06 $\pm$ 0.16} & \bad{0.452 $\pm$ 0.029} \\
   & 1005 & \bad{0.015719 $\pm$ 0.000358} & \bad{0.015722 $\pm$ 0.000360} & \bad{0.01 $\pm$ 0.08} & \bad{0.477 $\pm$ 0.025} \\
% ---------------- Energy Efficiency ------------------------------------
\midrule
\multirow{5}{*}{Energy Efficiency} & 153 & \bad{0.003620 $\pm$ 0.000327} & \bad{0.003650 $\pm$ 0.000326} & \bad{1.02 $\pm$ 0.95} & \bad{0.413 $\pm$ 0.044} \\
   & 306 & \bad{0.002928 $\pm$ 0.000122} & \bad{0.002926 $\pm$ 0.000122} & \bad{-0.03 $\pm$ 0.37} & \bad{0.398 $\pm$ 0.039} \\
   & 459 & \bad{0.002872 $\pm$ 0.000124} & \bad{0.002873 $\pm$ 0.000123} & \bad{0.04 $\pm$ 0.23} & \bad{0.453 $\pm$ 0.042} \\
   & 612 & \bad{0.002615 $\pm$ 0.000086} & \bad{0.002621 $\pm$ 0.000084} & \bad{0.29 $\pm$ 0.31} & \bad{0.452 $\pm$ 0.036} \\
   & 765 & \bad{0.002667 $\pm$ 0.000056} & \bad{0.002673 $\pm$ 0.000059} & \bad{0.21 $\pm$ 0.20} & \bad{0.436 $\pm$ 0.026} \\
% ---------------- House Price ------------------------------------
\midrule
\multirow{5}{*}{House Price} & 200 & \bad{0.000103 $\pm$ 0.000006} & \bad{0.000103 $\pm$ 0.000006} & \bad{0.77 $\pm$ 0.54} & \bad{0.344 $\pm$ 0.030} \\
   & 400 & \bad{0.000100 $\pm$ 0.000004} & \bad{0.000101 $\pm$ 0.000004} & \bad{0.10 $\pm$ 0.19} & \bad{0.463 $\pm$ 0.039} \\
   & 600 & \bad{0.000100 $\pm$ 0.000004} & \bad{0.000100 $\pm$ 0.000004} & \bad{0.07 $\pm$ 0.13} & \bad{0.515 $\pm$ 0.024} \\
   & 800 & \bad{0.000104 $\pm$ 0.000003} & \bad{0.000104 $\pm$ 0.000003} & \bad{0.03 $\pm$ 0.09} & \bad{0.476 $\pm$ 0.047} \\
   & 1000 & \bad{0.000103 $\pm$ 0.000003} & \bad{0.000104 $\pm$ 0.000003} & \bad{0.22 $\pm$ 0.17} & \bad{0.445 $\pm$ 0.042} \\
% ---------------- Parkinson's Monitoring ------------------------------------
\midrule
\multirow{5}{*}{Parkinson's Monitoring} & 1175 & \bad{0.004750 $\pm$ 0.000112} & \bad{0.004754 $\pm$ 0.000112} & \bad{0.08 $\pm$ 0.10} & \bad{0.425 $\pm$ 0.043} \\
   & 2350 & \bad{0.004672 $\pm$ 0.000096} & \bad{0.004681 $\pm$ 0.000099} & \bad{0.18 $\pm$ 0.14} & \bad{0.333 $\pm$ 0.026} \\
   & 3525 & \bad{0.004651 $\pm$ 0.000096} & \bad{0.004650 $\pm$ 0.000095} & \bad{-0.02 $\pm$ 0.06} & \bad{0.355 $\pm$ 0.028} \\
   & 4700 & \bad{0.004618 $\pm$ 0.000077} & \bad{0.004620 $\pm$ 0.000076} & \bad{0.05 $\pm$ 0.05} & \bad{0.449 $\pm$ 0.030} \\
   & 5875 & \bad{0.004655 $\pm$ 0.000054} & \bad{0.004655 $\pm$ 0.000054} & \bad{0.01 $\pm$ 0.03} & \bad{0.429 $\pm$ 0.024} \\
% ---------------- Wine Quality ------------------------------------
\midrule
\multirow{5}{*}{Wine Quality} & 1063 & \bad{0.021526 $\pm$ 0.000658} & \bad{0.021544 $\pm$ 0.000670} & \bad{0.07 $\pm$ 0.24} & \bad{0.393 $\pm$ 0.026} \\
   & 2126 & \bad{0.015126 $\pm$ 0.000321} & \bad{0.015131 $\pm$ 0.000318} & \bad{0.04 $\pm$ 0.05} & \bad{0.452 $\pm$ 0.033} \\
   & 3189 & \bad{0.015448 $\pm$ 0.000206} & \bad{0.015461 $\pm$ 0.000204} & \bad{0.09 $\pm$ 0.05} & \bad{0.443 $\pm$ 0.027} \\
   & 4252 & \bad{0.014830 $\pm$ 0.000270} & \bad{0.014839 $\pm$ 0.000269} & \bad{0.06 $\pm$ 0.04} & \bad{0.487 $\pm$ 0.035} \\
   & 5315 & \bad{0.014894 $\pm$ 0.000168} & \bad{0.014896 $\pm$ 0.000170} & \bad{0.01 $\pm$ 0.03} & \bad{0.505 $\pm$ 0.024} \\

    \bottomrule
  \end{tabular}
\end{table}
%HOSS

\newpage

%-------------------------------------------------------------
\section{Complete Per-Dataset Scores}\label{app:perseed}
%-------------------------------------------------------------
Table~\ref{tab:all_results} reports the baseline MSE, CRDA MSE and their percentage change ($\Delta$\,\%) for every dataset and sample size subset. It is a more comprehensive version of Table 1 in the main paper.  These results are the averages across 15 different seed runs and so we include their standard errors.\footnote{Per-seed results are available in the code repository at
\texttt{experiments/\{dataset\}/\{model\}/interim\_results}}

\begin{table}[h]
  \tiny
  \centering
  \setlength\tabcolsep{3.5pt}
  \caption{Complete results with standard errors for XGB and MLP across 15 seeds for each of the 9 datasets. Lower is better $\downarrow$.}
  \label{tab:all_results}
  \begin{tabular}{l c *{3}{c} *{3}{c}}
    \toprule
    \multirow{2}{*}{\textbf{Dataset}}
      & \multirow{2}{*}{\textbf{Sample Size}}
      & \multicolumn{3}{c}{\textbf{XGB} $\downarrow$}
      & \multicolumn{3}{c}{\textbf{MLP} $\downarrow$} \\ 
    \cmidrule(lr){3-5}\cmidrule(lr){6-8}
      & & $\text{MSE}_{\mathrm{baseline}}$ & $\text{MSE}_{\mathrm{CRDA}}$ & $\Delta$\,\% &
          $\text{MSE}_{\mathrm{baseline}}$ & $\text{MSE}_{\mathrm{CRDA}}$ & $\Delta$\,\% \\
    \midrule

% ---------------- CPU Performance ------------------------------------
\midrule
\multirow{5}{*}{CPU Performance} & 1638 & 0.00097 $\pm$ 0.00005 & 0.00089 $\pm$ 0.00004 & -7.0 $\pm$ 2.8 & 0.00112 $\pm$ 0.00007 & 0.00087 $\pm$ 0.00003 & -20.2 $\pm$ 3.3 \\
        & 3276 & 0.00088 $\pm$ 0.00003 & 0.00079 $\pm$ 0.00002 & -9.5 $\pm$ 2.3 & 0.00100 $\pm$ 0.00002 & 0.00085 $\pm$ 0.00002 & -14.0 $\pm$ 1.6 \\
        & 4914 & 0.00077 $\pm$ 0.00002 & 0.00072 $\pm$ 0.00001 & -6.2 $\pm$ 1.3 & 0.00093 $\pm$ 0.00002 & 0.00082 $\pm$ 0.00001 & -11.3 $\pm$ 1.6 \\
        & 6552 & 0.00073 $\pm$ 0.00002 & 0.00069 $\pm$ 0.00001 & -4.1 $\pm$ 1.7 & 0.00090 $\pm$ 0.00003 & 0.00079 $\pm$ 0.00001 & -10.5 $\pm$ 2.2 \\
        & 8190 & 0.00074 $\pm$ 0.00002 & 0.00070 $\pm$ 0.00001 & -5.2 $\pm$ 2.0 & 0.00087 $\pm$ 0.00001 & 0.00078 $\pm$ 0.00001 & -10.2 $\pm$ 0.8 \\

% ---------------- Satellite Image ------------------------------------
\midrule
\multirow{5}{*}{Satellite Image} & 1287 & 0.01778 $\pm$ 0.00047 & 0.01697 $\pm$ 0.00053 & -4.5 $\pm$ 1.5 & 0.02031 $\pm$ 0.00104 & 0.01629 $\pm$ 0.00059 & -18.4 $\pm$ 3.2 \\
        & 2574 & 0.01636 $\pm$ 0.00037 & 0.01576 $\pm$ 0.00041 & -3.7 $\pm$ 0.9 & 0.01747 $\pm$ 0.00039 & 0.01455 $\pm$ 0.00041 & -16.7 $\pm$ 1.5 \\
        & 3861 & 0.01460 $\pm$ 0.00035 & 0.01390 $\pm$ 0.00035 & -4.8 $\pm$ 0.8 & 0.01585 $\pm$ 0.00054 & 0.01211 $\pm$ 0.00037 & -23.1 $\pm$ 1.8 \\
        & 5148 & 0.01366 $\pm$ 0.00033 & 0.01300 $\pm$ 0.00031 & -4.7 $\pm$ 1.1 & 0.01415 $\pm$ 0.00045 & 0.01076 $\pm$ 0.00030 & -23.7 $\pm$ 1.3 \\
        & 6435 & 0.01254 $\pm$ 0.00030 & 0.01186 $\pm$ 0.00027 & -5.3 $\pm$ 0.8 & 0.01232 $\pm$ 0.00035 & 0.00989 $\pm$ 0.00029 & -19.7 $\pm$ 1.1 \\

% ---------------- Wind Power ------------------------------------
\midrule
\multirow{5}{*}{Wind Power} & 1314 & 0.00742 $\pm$ 0.00029 & 0.00721 $\pm$ 0.00029 & -2.8 $\pm$ 1.2 & 0.00752 $\pm$ 0.00025 & 0.00697 $\pm$ 0.00025 & -7.2 $\pm$ 1.6 \\
        & 2628 & 0.00602 $\pm$ 0.00012 & 0.00603 $\pm$ 0.00013 & 0.2 $\pm$ 0.6 & 0.00621 $\pm$ 0.00017 & 0.00562 $\pm$ 0.00011 & -9.2 $\pm$ 1.5 \\
        & 3942 & 0.00586 $\pm$ 0.00008 & 0.00578 $\pm$ 0.00008 & -1.3 $\pm$ 0.4 & 0.00593 $\pm$ 0.00008 & 0.00539 $\pm$ 0.00008 & -9.0 $\pm$ 0.7 \\
        & 5256 & 0.00570 $\pm$ 0.00006 & 0.00562 $\pm$ 0.00007 & -1.4 $\pm$ 0.4 & 0.00567 $\pm$ 0.00009 & 0.00533 $\pm$ 0.00009 & -6.2 $\pm$ 0.4 \\
        & 6570 & 0.00528 $\pm$ 0.00005 & 0.00522 $\pm$ 0.00005 & -1.1 $\pm$ 0.3 & 0.00530 $\pm$ 0.00004 & 0.00500 $\pm$ 0.00004 & -5.6 $\pm$ 0.5 \\

% ---------------- Synthetic Regression ------------------------------------
\midrule
\multirow{5}{*}{Synthetic Regression} & 200 & 0.00652 $\pm$ 0.00044 & 0.00564 $\pm$ 0.00032 & -12.0 $\pm$ 3.8 & 0.01993 $\pm$ 0.00178 & 0.01387 $\pm$ 0.00163 & -28.8 $\pm$ 6.5 \\
        & 400 & 0.00327 $\pm$ 0.00027 & 0.00312 $\pm$ 0.00023 & -3.2 $\pm$ 2.5 & 0.00610 $\pm$ 0.00037 & 0.00384 $\pm$ 0.00032 & -36.9 $\pm$ 3.1 \\
        & 600 & 0.00264 $\pm$ 0.00009 & 0.00242 $\pm$ 0.00007 & -7.9 $\pm$ 2.2 & 0.00321 $\pm$ 0.00027 & 0.00228 $\pm$ 0.00019 & -27.9 $\pm$ 3.1 \\
        & 800 & 0.00165 $\pm$ 0.00008 & 0.00161 $\pm$ 0.00009 & -2.2 $\pm$ 2.1 & 0.00223 $\pm$ 0.00017 & 0.00140 $\pm$ 0.00008 & -34.1 $\pm$ 4.1 \\
        & 1000 & 0.00152 $\pm$ 0.00005 & 0.00145 $\pm$ 0.00006 & -4.6 $\pm$ 2.9 & 0.00220 $\pm$ 0.00014 & 0.00123 $\pm$ 0.00007 & -42.3 $\pm$ 3.2 \\

% ---------------- Concrete Strength ------------------------------------
\midrule
\multirow{5}{*}{Concrete Strength} & 201 & 0.00777 $\pm$ 0.00071 & 0.00701 $\pm$ 0.00065 & -8.0 $\pm$ 3.8 & 0.01033 $\pm$ 0.00106 & 0.00793 $\pm$ 0.00052 & -17.8 $\pm$ 5.9 \\
        & 402 & 0.00493 $\pm$ 0.00036 & 0.00453 $\pm$ 0.00038 & -8.4 $\pm$ 2.8 & 0.00635 $\pm$ 0.00055 & 0.00496 $\pm$ 0.00038 & -19.8 $\pm$ 2.9 \\
        & 603 & 0.00473 $\pm$ 0.00025 & 0.00427 $\pm$ 0.00025 & -9.7 $\pm$ 2.2 & 0.00602 $\pm$ 0.00015 & 0.00494 $\pm$ 0.00014 & -17.6 $\pm$ 2.5 \\
        & 804 & 0.00365 $\pm$ 0.00017 & 0.00307 $\pm$ 0.00015 & -15.7 $\pm$ 1.9 & 0.00497 $\pm$ 0.00026 & 0.00361 $\pm$ 0.00013 & -24.8 $\pm$ 4.2 \\
        & 1005 & 0.00290 $\pm$ 0.00010 & 0.00256 $\pm$ 0.00013 & -12.2 $\pm$ 2.1 & 0.00422 $\pm$ 0.00025 & 0.00306 $\pm$ 0.00017 & -26.9 $\pm$ 1.8 \\

% ---------------- Energy Efficiency ------------------------------------
\midrule
\multirow{5}{*}{Energy Efficiency} & 153 & 0.00399 $\pm$ 0.00050 & 0.00344 $\pm$ 0.00052 & -13.3 $\pm$ 8.0 & 0.00583 $\pm$ 0.00049 & 0.00426 $\pm$ 0.00048 & -25.1 $\pm$ 7.0 \\
        & 306 & 0.00233 $\pm$ 0.00015 & 0.00206 $\pm$ 0.00015 & -12.2 $\pm$ 3.3 & 0.00321 $\pm$ 0.00015 & 0.00233 $\pm$ 0.00021 & -28.1 $\pm$ 5.0 \\
        & 459 & 0.00165 $\pm$ 0.00012 & 0.00143 $\pm$ 0.00012 & -10.5 $\pm$ 6.1 & 0.00188 $\pm$ 0.00016 & 0.00106 $\pm$ 0.00013 & -43.0 $\pm$ 4.8 \\
        & 612 & 0.00128 $\pm$ 0.00008 & 0.00100 $\pm$ 0.00006 & -19.3 $\pm$ 5.5 & 0.00091 $\pm$ 0.00008 & 0.00052 $\pm$ 0.00006 & -40.7 $\pm$ 4.0 \\
        & 765 & 0.00097 $\pm$ 0.00006 & 0.00076 $\pm$ 0.00007 & -21.0 $\pm$ 4.5 & 0.00053 $\pm$ 0.00008 & 0.00035 $\pm$ 0.00003 & -28.3 $\pm$ 4.4 \\

% ---------------- House Price ------------------------------------
\midrule
\multirow{5}{*}{House Price} & 200 & 0.00079 $\pm$ 0.00009 & 0.00064 $\pm$ 0.00005 & -14.2 $\pm$ 4.9 & 0.00102 $\pm$ 0.00012 & 0.00057 $\pm$ 0.00007 & -40.6 $\pm$ 5.0 \\
        & 400 & 0.00033 $\pm$ 0.00002 & 0.00031 $\pm$ 0.00002 & -5.4 $\pm$ 2.3 & 0.00041 $\pm$ 0.00003 & 0.00025 $\pm$ 0.00001 & -37.0 $\pm$ 3.8 \\
        & 600 & 0.00027 $\pm$ 0.00002 & 0.00026 $\pm$ 0.00002 & -4.9 $\pm$ 2.8 & 0.00029 $\pm$ 0.00002 & 0.00020 $\pm$ 0.00002 & -30.1 $\pm$ 3.9 \\
        & 800 & 0.00024 $\pm$ 0.00001 & 0.00022 $\pm$ 0.00001 & -9.9 $\pm$ 2.1 & 0.00023 $\pm$ 0.00001 & 0.00016 $\pm$ 0.00001 & -30.3 $\pm$ 4.2 \\
        & 1000 & 0.00020 $\pm$ 0.00001 & 0.00018 $\pm$ 0.00001 & -6.5 $\pm$ 1.9 & 0.00019 $\pm$ 0.00001 & 0.00014 $\pm$ 0.00001 & -27.0 $\pm$ 2.6 \\

% ---------------- Parkinson's Monitoring ------------------------------------
\midrule
\multirow{5}{*}{Parkinson's Monitoring} & 1175 & 0.00079 $\pm$ 0.00003 & 0.00072 $\pm$ 0.00003 & -8.4 $\pm$ 2.5 & 0.00165 $\pm$ 0.00012 & 0.00101 $\pm$ 0.00006 & -36.2 $\pm$ 4.0 \\
        & 2350 & 0.00034 $\pm$ 0.00002 & 0.00032 $\pm$ 0.00001 & -6.6 $\pm$ 2.9 & 0.00080 $\pm$ 0.00005 & 0.00054 $\pm$ 0.00003 & -31.8 $\pm$ 2.6 \\
        & 3525 & 0.00021 $\pm$ 0.00001 & 0.00020 $\pm$ 0.00001 & -2.8 $\pm$ 3.5 & 0.00048 $\pm$ 0.00003 & 0.00030 $\pm$ 0.00002 & -36.6 $\pm$ 4.1 \\
        & 4700 & 0.00015 $\pm$ 0.00001 & 0.00014 $\pm$ 0.00001 & -6.3 $\pm$ 2.5 & 0.00042 $\pm$ 0.00003 & 0.00021 $\pm$ 0.00001 & -46.4 $\pm$ 4.2 \\
        & 5875 & 0.00011 $\pm$ 0.00001 & 0.00011 $\pm$ 0.00001 & 1.7 $\pm$ 3.9 & 0.00026 $\pm$ 0.00002 & 0.00013 $\pm$ 0.00001 & -47.2 $\pm$ 4.8 \\

% ---------------- Wine Quality ------------------------------------
\midrule
\multirow{5}{*}{Wine Quality} & 1063 & 0.02057 $\pm$ 0.00058 & 0.02062 $\pm$ 0.00056 & 0.3 $\pm$ 0.8 & 0.02291 $\pm$ 0.00091 & 0.02284 $\pm$ 0.00133 & -0.3 $\pm$ 3.4 \\
        & 2126 & 0.01416 $\pm$ 0.00030 & 0.01429 $\pm$ 0.00030 & 1.0 $\pm$ 0.8 & 0.01539 $\pm$ 0.00027 & 0.01458 $\pm$ 0.00034 & -5.2 $\pm$ 1.6 \\
        & 3189 & 0.01391 $\pm$ 0.00020 & 0.01386 $\pm$ 0.00017 & -0.3 $\pm$ 0.5 & 0.01478 $\pm$ 0.00024 & 0.01423 $\pm$ 0.00025 & -3.6 $\pm$ 1.6 \\
        & 4252 & 0.01332 $\pm$ 0.00025 & 0.01324 $\pm$ 0.00027 & -0.6 $\pm$ 0.5 & 0.01386 $\pm$ 0.00028 & 0.01323 $\pm$ 0.00025 & -4.4 $\pm$ 0.9 \\
        & 5315 & 0.01332 $\pm$ 0.00013 & 0.01318 $\pm$ 0.00015 & -1.1 $\pm$ 0.3 & 0.01397 $\pm$ 0.00017 & 0.01328 $\pm$ 0.00020 & -5.0 $\pm$ 0.6 \\

    \bottomrule
  \end{tabular}
\end{table}

%-------------------------------------------------------------
\section{Compute Budget and Carbon Footprint}\label{app:compute}
%-------------------------------------------------------------
\begin{itemize}
  \item \textbf{Hardware.} AWS \texttt{c7i.24xlarge} (96 vCPU, 192 GB RAM, Xeon
        Platinum 8480C, \(\approx\)0.59 kW active draw).\footnote{Power estimate
        from Intel C7i workload proof sheet.}
  \item \textbf{Runtime.} 13.562 h total  
        (9.103 h MLP baseline, 4.459 h XGB baseline).
  \item \textbf{Energy.} \(13.562 \times 0.59 \approx\ 8.0\) kWh.
  \item \textbf{CO\(_2\)-eq.} Local grid intensity  
        \(34.5\ \text{g CO}_2/\text{kWh}\)
        \(\Rightarrow 8.0 \times 0.0345 \approx\ \text{0.28 kg CO}_2\).
\end{itemize}

\end{document}